\newtheorem{theorem}{Theorem}[section]
\renewcommand{\qedsymbol}{$\blacksquare$}
  \newenvironment{proof}[1][Proof]{\par\noindent\textbf{#1.}\ }{\hfill\qedsymbol\par}
\title{An Operational Kardashev-Style Scale for Autonomous AI - Towards AGI and Superintelligence}
\author{Przemyslaw Chojecki \\\small ulam.ai}
\date{November 17, 2025}
\begin{document}
\maketitle

\begin{abstract}
We propose a Kardashev-inspired yet operational \textbf{Autonomous AI (AAI) Scale} that measures the progression from fixed robotic process automation (AAI-0) to full artificial general intelligence (AAI-4) and beyond. Unlike narrative ladders, our scale is multi-axis and testable. We define ten capability axes (Autonomy, Generality, Planning, Memory/Persistence, Tool Economy, Self-Revision, Sociality/Coordination, Embodiment, World-Model Fidelity, Economic Throughput) aggregated by a composite \textbf{AAI-Index} (a weighted geometric mean). We introduce a measurable \textbf{Self-Improvement Coefficient} $\kappa$ (capability growth per unit of agent-initiated resources) and two \textbf{closure} properties (maintenance and expansion) that convert ``self-improving AI'' into falsifiable criteria. We specify \textbf{OWA-Bench}, an open-world agency benchmark suite that evaluates long-horizon, tool-using, persistent agents. We define \textbf{level gates} for AAI-0\ldots AAI-4 using thresholds on the axes, $\kappa$, and closure proofs. Synthetic experiments illustrate how present-day systems map onto the scale and how the \textbf{delegability frontier} (quality vs.\ autonomy) advances with self-improvement. We also prove a theorem that AAI-3 agent becomes AAI-5 over time with sufficient conditions, formalizing "baby AGI" becomes Superintelligence intuition.
\end{abstract}

\textbf{Keywords:} autonomous agents, AGI, self-improvement, Kardashev scale, AI, superintelligence

\section{Introduction}
Scaled automation is moving from brittle scripts (RPA) to open-ended agentic systems that plan, use tools, collaborate, and adapt. Researchers and practitioners often speak in levels (e.g., ANI$\to$AGI$\to$ASI; lab-specific grades of ``agents,'' ``reasoners,'' or ``organizations''), but such ladders are descriptive, not operational. They neither quantify how autonomous a system is, nor whether it can improve itself without human supervision.

We propose an \textbf{Autonomous AI (AAI) Scale} that: (i) is multi-dimensional; (ii) yields a single composite score for comparison; (iii) adds a temporal axis via $\kappa$ for self-improvement; and (iv) requires closure properties to establish sustainable autonomy. Inspired by Kardashev's civilizational scale, our AAI-0\ldots AAI-4 progression names intuitive phases while grounding each level in measurable gates.

This formalizes definitions of AGI and superintelligence, focusing on their self-improvement rates. We also prove a theorem that an AAI-3 agent becomes AAI-5 over time with sufficient conditions. This formalizes the intuition that sufficiently intelligent autonomous agents will become superintelligent (become better than humans at any task). 

Final remark, especially comparing with \cite{hendrycks-agi}, our definition of AGI is operational and results-driven rather than based on cognition. We find human cognition models, though useful in guidance, not sufficient as AGI itself is usually defined through results. Approach to AGI through swarm intelligence, coordinated effort of a legion of agents, is more likely to yield positive results than a single agent/model approach. This would also mean that we should rather be looking at society- or civilization-like intelligence/organization to get more insights into progress of intelligence. We compare our definitions with \cite{hendrycks-agi} in Section \ref{sec:chc-alignment}.

\subsection*{Contributions}
\begin{enumerate}[itemsep=0.25em, topsep=0.25em]
  \item A multi-axis operationalization of autonomy/general intelligence with ten normalized axes and a composite \textbf{AAI-Index}.
  \item \textbf{Self-Improvement Coefficient} ($\kappa$) and \textbf{closure} properties (maintenance \& expansion) for falsifiable ``self-improving AI.''
  \item \textbf{OWA-Bench}, a benchmark suite that stresses long-horizon planning, persistence under drift, tool discovery, and multi-agent coordination.
  \item \textbf{Level gates} (AAI-0\ldots AAI-4) with concrete thresholds enabling reproducible placement and progress tracking.
  \item Proof that any AAI-3 agent improving fast enough will eventually become AAI-5 (\textbf{escape rate of self-improvement}).
\end{enumerate}

\section{Related Work}

Measuring and comparing autonomy and general capability in AI has evolved along five complementary lines: (i) conceptual capability ladders that outline qualitative phase transitions in competence; (ii) matrix-style Levels of AGI frameworks that separate breadth (generality) from depth (relative performance); (iii) enterprise-oriented agentic automation maturity models; (iv) cosmological analogies (notably Kardashev) that offer a memorable long-run narrative; and (v) empirical benchmarks for agency probing tool use, long-horizon control, coordination, and sim-to-real. Our AAI scale leverages these strands while addressing three persistent gaps: (a) the need for a multi-axis composite that penalizes lopsided systems, (b) a longitudinal measure of self-improvement, and (c) auditable closure tests under drift and expansion.

\paragraph{Capability ladders.}
Popular ladders sketch qualitative progressions from narrow assistants to increasingly agentic systems. These narratives are useful for communication but typically lack operational gates, measurement-error treatment, or anti-gaming controls \cite{em360-2019-7-stages,floudas-2024-taxonomy}. We retain the intuitive staged arc while replacing prose-only gates with measurable axes and thresholds.

\paragraph{Levels of AGI frameworks.}
Matrix-style proposals explicitly distinguish breadth (coverage of domains) from depth (relative human parity), producing cell-based levels that can be audited \cite{morris-2023-levels}. Public-facing summaries have broadened adoption \cite{rosenberg-2024-axios-agi-levels}. These frameworks sharpen terminology, yet often omit longitudinal self-improvement rates, closure proofs under drift, and economics-aware throughput.

\paragraph{Agentic automation maturity (enterprise focus).}
Industry roadmaps describe a transition from RPA to agentic assistants and onward to autonomous orchestration of end-to-end workflows, emphasizing planning, tool use, and continuous improvement \cite{karjalainen-2024-agentic-automation}. Such models are pragmatic but rely on qualitative checklists; our AAI scale replaces these with standardized axes, a longitudinal coefficient for self-improvement, and auditable maintenance/expansion closures.

\paragraph{Cosmological analogies (Kardashev-style).}
Kardashev's typology ranks civilizations by extractable energy; AI analogies borrow this mnemonic arc to reason about capability scope \cite{kardashev-wikipedia,campbell-2017-kardashev-ai}. We adopt the mnemonic (AAI-0\,…\,AAI-4) while grounding each level in quantitative, reproducible gates.

\paragraph{Benchmarks for agency (web, code, multi-agent, embodied).}
Open-world web evaluations stress tool use, planning, and robustness to UI/auth/schema drift \cite{zhou-2024-webarena,yoran-2024-assistantbench}. Software-engineering benchmarks probe persistent reasoning and tool-mediated edits on real repositories \cite{swebench-2023}. Multi-agent frameworks analyze role specialization and coordination benefits \cite{wu-2023-autogen,li-2023-camel}. Embodied-AI platforms quantify sim-to-real transfer, safety, and control quality \cite{szot-2021-habitat2,li-2022-igibson2,tobin-2017-domain-rand,brohan-2023-rt2,liu-2022-behavior-habitat2}. Complementary lines study self-improvement via reflection, revision, and open-ended exploration \cite{shinn-2023-reflexion,huang-2023-voyager}. These inform our axis design (Autonomy, Planning, Tool Economy, Sociality, Embodiment) and motivate a longitudinal coefficient for auditable self-improvement.

\paragraph{Synthesis and gap.}
Across ladders \cite{em360-2019-7-stages,floudas-2024-taxonomy}, levels \cite{morris-2023-levels,rosenberg-2024-axios-agi-levels}, maturity models \cite{karjalainen-2024-agentic-automation}, analogies \cite{kardashev-wikipedia,campbell-2017-kardashev-ai}, and benchmarks \cite{zhou-2024-webarena,yoran-2024-assistantbench,swebench-2023,wu-2023-autogen,li-2023-camel,szot-2021-habitat2,li-2022-igibson2,tobin-2017-domain-rand,brohan-2023-rt2,liu-2022-behavior-habitat2,shinn-2023-reflexion,huang-2023-voyager}, three gaps remain: (i) absence of a multi-axis composite discouraging over-optimization to a single suite, (ii) missing longitudinal rate metrics of self-improvement, and (iii) lack of closure proofs for maintenance under drift and autonomous expansion. The AAI scale directly addresses these. 

The most recent and conceptually integrated effort to define AGI is in \cite{hendrycks-agi}, which grounds “general intelligence” in the Cattell-Horn-Carroll (CHC) framework. In this view, AGI is not a monolith but a breadth-and-proficiency profile across broad abilities (e.g., knowledge/comprehension, reading-writing, fluid reasoning, working memory, long-term storage and retrieval, visual and auditory processing, processing speed, quantitative knowledge), each evaluated against well-educated adult human norms. The resulting score is non-compensatory in spirit: severe deficits in core cognitive functions cannot be masked by strength elsewhere, and assessments emphasize reliability, long-horizon retention, and retrieval fidelity rather than surface heuristics.

\paragraph{Relation to CHC.}
Our AAI scale complements \cite{hendrycks-agi} by focusing on agentic performance under deployment constraints - planning with tools, persistence, self-revision, multi-agent coordination, embodiment/safety, and throughput economics. Whereas the CHC-based definition characterizes a system's cognitive endowment, AAI characterizes what the system can reliably do in open-ended settings. This complementarity is made explicit in \S\ref{sec:chc-alignment}, where we provide a crosswalk from AAI axes (Generality $G$, Planning $P$, Memory/Persistence $M$, World-Model Fidelity $W$, Autonomy $A$, Tool Economy $T$, Self-Revision $R$, Sociality $S$, Embodiment $E$, Economic Throughput $\text{\$}$) to the closest CHC domains and import CHC-style probes into OWA-Bench.

In brief, we (i) anchor cognitively aligned axes to human performance following \cite{hendrycks-agi} and require minimum cognitive gates on memory, retrieval fidelity, and reasoning before promotion; (ii) aggregate evidence with a non-compensatory composite that penalizes lopsided profiles rather than rewarding a single strong axis; and (iii) track change over time with a simple longitudinal term and auditable checks that improvements persist under interface drift and do not regress prior skills. Formal definitions, thresholds, and the aggregation procedure are deferred to \S\ref{sec:chc-alignment}.

\paragraph{Practical synthesis.}
In sum, \cite{hendrycks-agi} offers a principled, human-referenced cognitive backbone; AAI supplies the agentic, tool-using, and economics-aware shell. Together they yield a two-view evaluation: a system should (1) meet CHC-derived minima on the cognitive core to qualify for delegated autonomy, and (2) demonstrate stable, audited performance on AAI's deployment axes to earn higher autonomy levels. We operationalize this synthesis in \S\ref{sec:chc-alignment} via (i) a CHC$\leftrightarrow$AAI crosswalk and anchors, (ii) CHC-style micro-batteries embedded in OWA-Bench, (iii) level gates tied to memory and retrieval fidelity, and (iv) a jaggedness penalty ensuring breadth with balance.

\paragraph{Representation theory.}
We take a representation-theoretic view of autonomous systems. An agent is never observed “directly,” but through a battery of tasks that induces a structured representation of its behavior: logs, traces, artifacts, and scores that are invariant to irrelevant implementation details and equivariant to controlled changes in the environment. 

The AAI scale, OWA-Bench suites, and their gates define the map from agents to representations; the human anchors and level criteria define which features of that representation matter. Discrete levels act as equivalence classes over representations that satisfy the same auditable properties, while continuous telemetry (the link score and the Delegability Frontier) summarizes dynamics within a class. In this framing, claims about “human-level” ability are statements about the alignment between an agent's representation and human reference representations under the same measurement process. 

The goal is not to capture every internal mechanism, but to enforce sufficiency, identifiability, and robustness of what is measured, so that improvements in the representation correspond to genuine, externally valid improvements in capability.

These representation theoretic insights will be formalized in the next paper.

\section{The AAI Scale: Axes, Index, and Levels}
\label{sec:aai}
A \emph{battery} is the tuple
\[
\mathcal{B}\;=\;\big(T,\;\mathcal{F},\;\mathsf{S},\;Q^{*},\;\mu,\;\mathsf{D},\;\Pi,\;\mathsf{R}\big),
\]
where:
(i) $T$ is a finite set of task \emph{instances} $t\in T$ (e.g., webpages, APIs, repos, simulated scenes) with associated initial context and termination conditions; 
(ii) $\mathcal{F}=\{F_1,\ldots,F_N\}$ is a partition of $T$ into \emph{families} (domains) used for generality accounting; 
(iii) $\mathsf{S}=\{S_t:\Omega_t\to[0,1]\}_{t\in T}$ are task-specific \emph{proper scoring rules} that map an execution trace $\omega\in\Omega_t$ to a quality score $q(t)=S_t(\omega)$; 
(iv) $Q^{*}:T\to(0,1)$ sets task-level target quality thresholds $q^{*}(t)$; 
(v) $\mu$ is a sampling measure over triples $(t,s,\delta)\in T\times \Pi\times \mathsf{D}$ specifying how tasks, random seeds $s$ (for environment/randomization), and \emph{drifts} $\delta$ (UI/schema/layout perturbations) are drawn; 
(vi) $\mathsf{D}$ is a family of drift operators acting on tasks/environments, with a prescribed schedule or magnitude law (e.g., $\delta\sim\mathcal{N}(0,\sigma^2)$ for parametric perturbations or a discrete set of DOM changes); 
(vii) $\Pi$ is a seed space (RNG states) used for reproducibility and Confidence Intervals (CI) bands; 
(viii) $\mathsf{R}$ is a \emph{resource accounting} schema mapping action/tool/compute events in a trace to nonnegative costs $c(t)\in\mathbb{R}_{\ge 0}$ (e.g., API spend, GPU time, operator minutes). 

Given an agent $\mathcal{A}$ and budgetary constraints from $\mathsf{R}$, an evaluation draws i.i.d.\ samples $(t_i,s_i,\delta_i)\sim\mu$ for $i=1,\dots,n$ and produces traces 
\[
\omega_i \;=\; \mathrm{Run}\!\left(\mathcal{A};\,t_i,\,s_i,\,\delta_i,\,\mathsf{R}\right),
\]
from which we derive per-task observables: quality $q(t_i)=S_{t_i}(\omega_i)$, success $z(t_i)=\mathbb{I}\{q(t_i)\ge q^{*}(t_i)\}$, uninterrupted action count $a(t_i)\in\mathbb{N}$, plan depth $d(t_i)\in\mathbb{N}$ (the length of the longest executed path of prerequisite actions) and incurred cost $c(t_i)\in\mathbb{R}_{\ge 0}$. Axis-specific raw statistics $r_x=r_x(\{\omega_i\}_{i=1}^n)$ are computed by fixed functionals (see Eqs.\,(\ref{eq:A})-(\ref{eq:dollar}) for $A,\ldots,\text{\$})$ and then normalized by calibration maps $\phi_x$ in \eqref{eq:phi} to yield $x\in[0,1]$. Family means $\bar q(F_j)=|F_j|^{-1}\sum_{t\in F_j}q(t)$ and coverage indicators $\mathbb{I}\{\bar q(F_j)\ge \tau_j\}$ support generality $G$ as in \eqref{eq:G}. Uncertainty is obtained by resampling the evaluation law $\mu$ (bootstrap or block-bootstrap under temporal correlation), yielding CIs for each axis and for the composite index $\mathcal{C}$ via \eqref{eq:index}. 

\paragraph{Capability (base and aggregate).}
Given the battery $\mathcal{B}=(T,\mathcal{F},\mathsf{S},Q^{*},\mu,\mathsf{D},\Pi,\mathsf{R})$
and traces $\omega_i=\mathrm{Run}(\mathcal{A};t_i,s_i,\delta_i,\mathsf{R})$, the
\emph{base per-instance capability} is the task quality already defined by the
proper scoring rule:
\[
q(t)\;=\;S_t(\omega)\in[0,1].
\]
Optionally, a strict-success variant uses $z(t)=\mathbb{I}\{q(t)\ge q^{*}(t)\}$.
For a fixed instance $t$, the \emph{per-task capability} of agent $\mathcal{A}$
under the evaluation law is the seed/drift expectation
\[
C(\mathcal{A};t)\;=\;\mathbb{E}_{(s,\delta)\sim \mu(\cdot\,|\,t)}
\Big[\,S_t\!\big(\mathrm{Run}(\mathcal{A};t,s,\delta,\mathsf{R})\big)\,\Big],
\]
estimated by the sample mean over draws with $t_i=t$.
Aggregating over instances with battery-fixed weights $\{\omega_t\}_{t\in T}$,
(e.g., uniform within families with family weights that sum to $1$),
yields the \emph{aggregate capability}
\[
\mathcal{C}(\mathcal{A})\;=\;\sum_{t\in T}\omega_t\,C(\mathcal{A};t)
\quad\text{(or }\ \widehat{\mathcal{C}}(\mathcal{A})=\sum_{i=1}^{n}\omega_{t_i}\,q(t_i)\ \text{ in-sample)}.
\]
This base capability is used wherever a task-level score is needed (e.g., for solo
vs.\ multi-role comparisons) and is conceptually separate from axis-specific
functionals $r_x(\{\omega_i\})$ that also consume other observables (plan depth,
cost, etc.) before normalization by $\phi_x$.

\emph{Admissibility.} A battery $\mathcal{B}$ is \emph{admissible} if: (a) each $S_t$ is a proper (or strictly proper) scoring rule on its outcome space; (b) every family $F_j$ satisfies $|F_j|\ge m_{\min}$ for some fixed $m_{\min}\ge 5$; (c) $\mu$ has full support on all intended drift magnitudes used to score $T$; (d) resource accounting $\mathsf{R}$ is fixed during a leaderboard window; and (e) seed disclosure and artifact escrow (snapshots, diffs, checksums) ensure replicability. 

Requiring admissibility makes results statistically sound, hard to game, and comparable over time: (a) proper (or strictly proper) scoring rules ensure systems are rewarded exactly for calibrated performance---no incentive to hedge or exploit quirks---so confidence intervals and risk metrics are meaningful; (b) a minimum per-family size $m_{\min}$ gives each domain enough samples for power, stable coverage indicators, and reliable CI bands, avoiding ``one-shot'' overfitting; (c) full-support drift sampling guarantees that tool-use and robustness scores (e.g., $T$) reflect performance across the intended spectrum of UI/API changes rather than a cherry-picked subset; (d) fixed resource accounting $\mathsf{R}$ during a leaderboard window makes cost/latency/throughput comparisons fair and repeatable, preventing budget inflation from masquerading as capability; and (e) seed disclosure plus artifact escrow (snapshots, diffs, checksums) delivers replicability and auditability, enabling independent re-runs, longitudinal tracking, and governance uses (e.g., verifying no regressions or hidden configuration changes).

\emph{Remark.} In persistence settings (memory $M$), $t$ expands to $(t,\Delta)$ where $\Delta$ is an inter-session lag; the scoring rule $S_{t,\Delta}$ induces retention curves $q(\Delta)$ whose summaries (half-life or decay rate) enter $M$ before normalization.

\emph{Provenance.} Our tuple formalization extends the psychometric notion of a \emph{test battery} \cite{apa-battery,sage-battery} and ML benchmark suites (e.g., GLUE) \cite{wang-2019-glue}, while making explicit proper scoring \cite{gneiting-raftery-2007-proper}, concept drift \cite{gama-2014-concept-drift}, and bootstrap uncertainty \cite{efron-tibshirani-1993-bootstrap}.

\subsection{Axes (normalized to [0,1])}
Let $\mathcal{X}=\{A,G,P,M,T,R,S,E,W,\text{\$}\}$ denote ten axes. Each $x\in\mathcal{X}$ is a \emph{normalized} functional of raw task outcomes on a domain-specific battery $\mathcal{B}$ with fixed scoring rules that we define below. We use anchors $L_x<U_x$ to define a monotone calibration map

\begin{equation}
\phi_x(r)\;=\;\mathrm{clip}\!\left(\frac{r-L_x}{U_x-L_x},\,0,\,1\right),
\qquad x\in\mathcal{X},
\label{eq:phi}
\end{equation}
that maps a raw statistic $r$ into $[0,1]$. Unless specified, $L_x$ is a strong-but-feasible baseline (e.g., tuned single-agent LLM) and $U_x$ a reference target (e.g., expert-team performance, or a challenge-set ceiling). Uncertainty is summarized by nonparametric bootstrap CIs over tasks/seeds.

\paragraph{Notation (battery-level).}
For a task $t\in\mathcal{B}$, let $q(t)\in[0,1]$ be a quality score from the task's proper scoring rule; $a(t)\in\mathbb{N}$ the number of uninterrupted actions; $d(t)\in\mathbb{N}$ the minimal solved plan depth; $c(t)\in\mathbb{R}_{\ge 0}$ the incurred monetary cost; and $z(t)\in\{0,1\}$ a success indicator at the task's target threshold $q^{*}(t)\in(0,1)$. Let $\mathbb{I}\{\cdot\}$ be the indicator. All expectations below are empirical means over $\mathcal{B}$ unless stated.

\vspace{0.35em}
\noindent\textbf{A - Autonomy.}
With a horizon cap $H\in\mathbb{N}$,
\begin{equation}
\widehat{A}\;=\;\frac{1}{|\mathcal{B}|}\sum_{t\in\mathcal{B}}\min\!\left(\frac{a(t)}{H},\,1\right),\qquad
A\;=\;\phi_A\!\big(\widehat{A}\big).
\label{eq:A}
\end{equation}
\emph{Remarks.} $H$ prevents inflation by ultra-long runs; choose $H$ per domain (e.g., UI workflows vs.\ long builds). Report the distribution of $a(t)$ and the fraction of tasks completed without human intervention.

\vspace{0.25em}
\noindent\textbf{G - Generality.}
Partition the battery into families $\mathcal{F}=\{F_1,\ldots,F_N\}$; declare family $F_i$ \emph{covered} if the mean quality meets its threshold: $\bar q(F_i)\!=\!\frac{1}{|F_i|}\sum_{t\in F_i}q(t)\ge \tau_i$. Then
\begin{equation}
\widehat{G}\;=\;\frac{1}{N}\sum_{i=1}^{N}\mathbb{I}\{\bar q(F_i)\ge\tau_i\},\qquad
G\;=\;\phi_G\!\big(\widehat{G}\big).
\label{eq:G}
\end{equation}
\emph{Remarks.} Choose $\tau_i$ as a relative percentile vs.\ vetted baselines or a fixed KPI. Family weighting can be uniform (default) or application-specific.

\vspace{0.25em}
\noindent\textbf{P - Planning depth.}
Let $D$ be the target depth anchor (per suite). Define
\begin{equation}
\widehat{P}\;=\;\frac{1}{|\mathcal{B}|}\sum_{t\in\mathcal{B}}\min\!\left(\frac{d(t)}{D},\,1\right),\qquad
P\;=\;\phi_P\!\big(\widehat{P}\big).
\label{eq:P}
\end{equation}
\emph{Remarks.} Use the smallest depth at which the agent solves the task (credit partial hierarchical plans). For tasks without explicit hierarchies, proxy with minimal dependency-chain length.

\vspace{0.25em}
\noindent\textbf{M - Memory/Persistence.}
For each task family with persistence, measure a retention curve $q(\Delta)$ at lag $\Delta$ (days), and fit an exponential $q(\Delta)\approx q_0 e^{-\lambda\Delta}$ or compute the empirical half-life $t_{1/2}=\inf\{\Delta:\,q(\Delta)\le q_0/2\}$ which is related by $t_{1/2}=\frac{\ln 2}{\lambda}$. Combine retention and retrieval:
\begin{align}
\widehat{M}_1&=\exp(-\lambda/\lambda_{\max}),\qquad
\widehat{M}_2=\mathrm{Rec}@K,\\
\widehat{M}&=\tfrac{1}{2}(\widehat{M}_1+\widehat{M}_2),\qquad
M=\phi_M(\widehat{M}).
\label{eq:M}
\end{align}
\emph{Remarks.} $\lambda_{\max}$ sets a tolerable forgetting rate (can be chosen by picking a minimum acceptable half-life $t_{\min}$); $\mathrm{Rec}@K$ is retrieval recall@K on logged facts/specs. Use the median across families. Interpretation: $\lambda=0$ (infinite half-life) $\Rightarrow \widehat M_1=1$; $t_{1/2}=t_{\min}\Rightarrow \widehat M_1=e^{-1}\approx0.37$;
$t_{1/2}=2t_{\min}\Rightarrow \widehat M_1\simeq e^{-1/2}\approx0.61$.

\vspace{0.25em}
\noindent\textbf{T - Tool Economy.}
Let $\mathcal{T}$ be \emph{tool categories} (APIs, DBs, IDE actions, browsers, cloud ops) discovered/used with success under drift. Define coverage and drifted success:
\begin{align}
\text{cov} &= \frac{|\{c\in\mathcal{T}:\,\text{used\&required}\}|}{|\{c\in\mathcal{T}:\,\text{required}\}|},\\
\text{succ}_\delta &= \mathbb{E}\big[z_\delta(t)\big],\quad z_\delta(t)=\mathbb{I}\{q_\delta(t)\ge q^{*}(t)\},
\end{align}
where $q_\delta$ scores under controlled schema/UI drifts of magnitude $\delta$. With a size prior $S=\log(1+|\mathcal{T}|)/\log(1+S_{\max})$,
\begin{equation}
\widehat{T}=\big(\text{cov}\cdot \text{succ}_\delta\cdot S\big)^{1/3},\qquad T=\phi_T(\widehat{T}).
\label{eq:T}
\end{equation}
\emph{Remarks.} $S$ rewards breadth with diminishing returns; report results for multiple $\delta$.

\vspace{0.25em}
\noindent\textbf{R - Self-Revision.}
Consider an autonomous revision event $r$ that modifies model/policy/workflow without human labels occurring between times $t_{\mathrm{pre}}$ and $t_{\mathrm{post}}$. Let $S_{\mathrm{rev}}^{\mathrm{pre}}$ and $S_{\mathrm{rev}}^{\mathrm{post}}$ be the system before/after $r$;
let $S_{\mathrm{ctrl}}$ be a frozen control. Evaluate a capability metric
$C(\cdot\,;\mathcal{H}_r)$ on a matched holdout $\mathcal{H}_r$ (same weights, seeds, and resource budget):

\[
\widehat{C}^{\mathrm{pre}}_{\mathrm{rev}} \;=\; C\!\left(S_{\mathrm{rev}}^{\mathrm{pre}};\mathcal{H}_r\right),\quad
\widehat{C}^{\mathrm{post}}_{\mathrm{rev}} \;=\; C\!\left(S_{\mathrm{rev}}^{\mathrm{post}};\mathcal{H}_r\right),\quad
\widehat{C}^{\mathrm{pre}}_{\mathrm{ctrl}} \;=\; C\!\left(S_{\mathrm{ctrl}};\mathcal{H}_r\right),\quad
\widehat{C}^{\mathrm{post}}_{\mathrm{ctrl}} \;=\; C\!\left(S_{\mathrm{ctrl}};\mathcal{H}_r\right).
\]

The \emph{debiased capability delta} (difference-in-differences) is defined by
\[
\Delta\mathcal{C}_r \;=\;
\big(\widehat{C}^{\mathrm{post}}_{\mathrm{rev}}-\widehat{C}^{\mathrm{pre}}_{\mathrm{rev}}\big)\;-\;
\big(\widehat{C}^{\mathrm{post}}_{\mathrm{ctrl}}-\widehat{C}^{\mathrm{pre}}_{\mathrm{ctrl}}\big).
\]

To quantify how much of the propose$\to$implement$\to$validate pipeline was autonomous, define stage-level autonomy fractions
$a^{(P)}_r,a^{(I)}_r,a^{(V)}_r\in[0,1]$ for \textbf{propose}, \textbf{implement}, and \textbf{validate}, respectively.
Let $(\alpha_P,\alpha_I,\alpha_V)\!\in\![0,1]^3$ with $\alpha_P{+}\alpha_I{+}\alpha_V{=}1$ be fixed stage weights.
Then the \emph{autonomy factor} is
\[
\rho_r \;=\; \alpha_P\,a^{(P)}_r \;+\; \alpha_I\,a^{(I)}_r \;+\; \alpha_V\,a^{(V)}_r \;\in\; [0,1].
\]

\textit{Default instantiation.} $a^{(P)}_r{=}1$ if the change was agent-initiated (no human prompt specifying the patch), else $0$;
$a^{(I)}_r$ is the fraction of applied edits/actions executed automatically (e.g., auto-generated diff lines over total);
$a^{(V)}_r$ is the fraction of acceptance checks that were fully automated (no human labels), e.g., passed regression tests / total checks.
Use $(\alpha_P,\alpha_I,\alpha_V)=(\tfrac{1}{3},\tfrac{1}{3},\tfrac{1}{3})$ unless otherwise stated.

We define Self Revision by aggregating over events in a window $\mathcal{R}_{\text{win}}$ (the set of self-revision events that occurred within a fixed reporting time window, and pass filters (matched pre/post eval on the same holdout, resource parity, logged diffs/seeds/checksums, no disqualifying human labels)):
\begin{equation}
\widehat{R}\;=\;\mathrm{clip}\!\left(\frac{1}{Z}\sum_{r\in\mathcal{R}_{\text{win}}}\rho_r\cdot \max(\Delta \mathcal{C}_r,0),\,0,\,1\right),\quad
R=\phi_R(\widehat{R}),
\label{eq:R}
\end{equation}
with $Z$ a scaling anchor (e.g., target cumulative gain). \emph{Remarks.} We require ablations to establish causality - e.g., temporarily revert the patch or disable the discovered tool; the gain should vanish if the cause is removed. We also log diffs, seeds, and checksums so runs are reproducible and auditable (same code/data config recreates the measured $\Delta\mathcal{C}_r$).

\textit{Example:} The agent introduces a new indexer; VRP@K on the holdout moves $0.78\!\to\!0.84$ while the frozen control moves $0.78\!\to\!0.80$, so $\Delta\mathcal{C}_r=(0.84-0.78)-(0.80-0.78)=0.04$. If the agent proposed, implemented, and validated with minimal human approval, set $\rho_r=0.9$. With $Z=0.10$, this event contributes $0.9\!\times\!0.04=0.036$ to the sum (before scaling and clipping).

\vspace{0.25em}
\noindent\textbf{S - Sociality/Coordination.}
Our goal is to measure the \emph{coordination premium} unlocked by allowing multiple concurrent roles/agents \emph{under identical tasks, tools, and total resource budgets}. The single agent is treated as the special case $m{=}1$ of the same system.

Let $\mathcal{A}_m$ denote the system with a \emph{concurrency cap} of $m\in\{1,\ldots,M\}$ active roles/agents; all other settings (tasks, tools/APIs, resource accounting $\mathsf{R}$, seeds/drifts sampling law $\mu$) are held fixed. For each task instance $t\in T$, define the per-task capability $C_m(t)\in[0,1]$ by reusing the battery's proper scoring rules $S_t$:
\[
C_m(t)\;=\;\mathbb{E}_{(s,\delta)\sim \mu(\cdot\,|\,t)}\!\left[\,S_t\!\big(\mathrm{Run}(\mathcal{A}_m;\,t,s,\delta,\mathsf{R})\big)\right]
\quad(\text{estimated by the seed/drift average}).
\]

For each $t$, define the headroom-normalized lift from $m{=}1$ to the best multi-role configuration:
\[
s_t \;=\; \left[\,\frac{\displaystyle \max_{m>1\le M} C_m(t)\;-\;C_1(t)}{\,1 - C_1(t) + \varepsilon\,}\,\right]_+,
\qquad [x]_+=\max(x,0),\ \ \varepsilon\in(0,10^{-6}],
\]
which measures how much of the remaining headroom over $C_1(t)$ is realized via coordination (clipped at zero if coordination does not help). Aggregate across the battery with fixed weights $\{\omega_t\}_{t\in T}$, $\sum_t\omega_t=1$:
\[
\bar s \;=\; \sum_{t\in T}\omega_t\, s_t.
\]

\emph{Deadlock/chatter penalty.} Let $m^\star(t)\in\arg\max_{m>1} C_m(t)$ (tie-break by lower comms-per-action). For each $(t,s,\delta)$ evaluated at $m^\star(t)$, form an \emph{episode} $e$. From execution logs define indicators:
\begin{itemize}
  \item $D_e\in\{0,1\}$: \textbf{unresolved conflict} - no final commit or incompatible commits at cutoff.
  \item $L_e\in\{0,1\}$: \textbf{loop} - no new verified external action for $\ge L$ consecutive turns or a repeated tool-call cycle.
  \item $C_e\in\{0,1\}$: \textbf{chatter overflow} - communication tokens per verified external action exceed threshold $\tau_{\text{comm}}$.
  \item $M_e\in\{0,1\}$: \textbf{mode collapse} - near-duplicate messages or identical tool calls for $\ge K$ successive turns without progress.
\end{itemize}
Let $\overline{X}=\frac{1}{|\mathcal{E}|}\sum_{e\in\mathcal{E}}X_e$ be episode means and define
\[
\pi_{\text{conflict}} \;=\; \overline{\mathbb{I}\{D_e \lor L_e\}},
\qquad
\pi_{\text{deadlock}} \;=\; \mathrm{clip}\!\big(\pi_{\text{conflict}} + \alpha\,\overline{C_e} + \beta\,\overline{M_e},\,0,\,1\big),
\]
with small weights $\alpha,\beta\in[0,1]$ (defaults $\alpha{=}\beta{=}0.25$) and defaults $L{=}3$, $K{=}3$, and $\tau_{\text{comm}}$ set to the $m{=}1$ median communications-per-action on $T$ (a \emph{verified external action} is a tool/API call, code run, or environment transition validated by the task oracle).

\emph{Score and normalization.} The raw sociality statistic is the penalized lift
\[
\widehat{S} \;=\; \mathrm{clip}\!\big(\bar s\cdot(1-\pi_{\text{deadlock}}),\,0,\,1\big),
\]
and the reported axis score is $S=\phi_S(\widehat{S})$ via the calibration map in~\eqref{eq:phi}.

\emph{Fairness and comparability.} All comparisons across $m$ use the \emph{same} tasks $T$, tools/APIs, and a \emph{fixed total resource budget} per task (time/tokens/API calls) under $\mathsf{R}$; seeds/drifts are matched or resampled consistently from $\mu$. The only degree of freedom is \emph{concurrency} (number of roles and protocol), so $S$ isolates the coordination premium rather than resource scaling. Uncertainty for $S$ follows Sec.~\ref{sec:aai}: nonparametric bootstrap over tasks and seeds (or block-bootstrap if temporally correlated), yielding CIs.

\vspace{0.25em}
\noindent\textbf{E   Embodiment/Actuation (robotics only).}
This axis measures \emph{whether a robotic agent can act reliably and safely in the real world and whether simulator performance transfers to reality}.
We combine three terms-\emph{actuation reliability} (AR), \emph{safety} (SS), and \emph{sim-to-real agreement} (S2R)-in a non-compensatory way.

\emph{Evaluation setting.} Let $\mathcal{H}_{\mathrm{real}}$ be the set of \emph{real-world} episodes produced by draws $(t,s,\delta)$ under the battery's law $\mu$ with the physical robot and resource schema $\mathsf{R}$; let $\mathcal{H}_{\mathrm{sim}}$ be the matched set of \emph{simulation} episodes for the same task instances (or paired surrogates) using the same policies and domain-randomization schedule $\mathsf{D}$. For any episode, the battery already defines quality $q(t)=S_t(\omega)$ and success $z(t)=\mathbb{I}\{q(t)\ge q^{*}(t)\}$ (Sec.~\ref{sec:aai}); safety-stopped or out-of-bounds runs count as $z(t)=0$.

\emph{Actuation reliability (AR).} AR is the expected strict success rate on real executions:
\[
\mathrm{AR}\;=\;\mathbb{E}_{(t,s,\delta)\in\mathcal{H}_{\mathrm{real}}}\big[z(t)\big]
\;\;\approx\;\;\frac{1}{|\mathcal{H}_{\mathrm{real}}|}\sum_{(t,s,\delta)\in\mathcal{H}_{\mathrm{real}}} z(t)\in[0,1].
\]
For reporting, include bootstrap CIs over tasks/seeds.

\emph{Safety score (SS).} Incidents are categorized by severity: \emph{near-miss} (nm), \emph{minor} (min), \emph{major} (maj), \emph{critical} (crit).
Let $H$ be the total exposure hours in $\mathcal{H}_{\mathrm{real}}$. Define \emph{rates per 100~hours}
\[
\nu_{x}\;=\;100\,\frac{N_x}{H},\quad x\in\{\text{nm,min,maj,crit}\},
\]
where $N_x$ is the count of $x$-incidents. Example taxonomy: near-miss = protective stop or breach of a distance/force margin; minor = contact or property damage below reporting threshold; major = reportable injury or significant damage; critical = severe injury, uncontrolled hazard, or standards breach.
Choose nonnegative severity weights $w_{\text{nm}},w_{\text{min}},w_{\text{maj}},w_{\text{crit}}$ (defaults $0.25,1,5,20$).
Define
\[
\mathrm{SS}\;=\;1-\min\!\left(1,\,w_{\text{nm}}\nu_{\text{nm}}+w_{\text{min}}\nu_{\text{min}}+w_{\text{maj}}\nu_{\text{maj}}+w_{\text{crit}}\nu_{\text{crit}}\right)\in[0,1].
\]
If $N_{\text{crit}}>0$ in the reporting window, set $\mathrm{SS}=0$ (and report the incident), regardless of the weighted sum.

\emph{Sim-to-real agreement (S2R).} Let $\mathrm{AR}_{\text{sim}}$ be the strict success rate computed on the matched simulation episodes $\mathcal{H}_{\mathrm{sim}}$ (same tasks/policies, domain randomization from $\mathsf{D}$), and let $\mathrm{AR}_{\text{real}}=\mathrm{AR}$ from above.
Define
\[
\mathrm{S2R}\;=\;1-\big|\mathrm{AR}_{\text{sim}}-\mathrm{AR}_{\text{real}}\big|\in[0,1],
\]
so perfect transfer gives $\mathrm{S2R}=1$, and large simulation–reality gaps reduce the score. (If the matching is imperfect, reweight sim episodes to the real context distribution before computing $\mathrm{AR}_{\text{sim}}$.)

\emph{Composite and reporting.} Combine the three terms with a geometric mean to prevent compensation:
\[
\widehat{E}\;=\;(\mathrm{AR}\cdot \mathrm{SS}\cdot \mathrm{S2R})^{1/3},\qquad E=\phi_E(\widehat{E}).
\]
Report \emph{MTBF} (mean time between failures = $H/N_{\text{fail}}$, failures include $z(t)=0$ due to actuation), \emph{MTTR} (mean time to repair), and \emph{MTBSI} (mean time between safety incidents = $H/(N_{\text{nm}}{+}N_{\text{min}}{+}N_{\text{maj}}{+}N_{\text{crit}})$), together with Poisson/bootstrap CIs for incident rates. The geometric mean and the critical-incident gate enforce that high task success cannot offset unsafe behavior or poor transfer.

\vspace{0.25em}
\noindent\textbf{W   World-Model Fidelity.}

This axis measures whether the agent assigns accurate, well-calibrated probabilities to verifiable propositions.

For a sampled triple $(t,s,\delta)\sim\mu$, the evaluation produces a finite trace
$\omega_{t,s,\delta}=(r_1,\ldots,r_{N(\omega)})\in\Omega_t$ with a terminal record $r_{N(\omega)}$
(at termination). The terminal record contains a designated scalar coordinate
$\mathrm{prob}\in[0,1]$ giving the agent's stated probability for the proposition evaluated by $t$.
Define the per-run probability and binary target by
\[
p_{t,s,\delta}\;\equiv\;\big(r_{N(\omega_{t,s,\delta})}\big)_{\mathrm{prob}}\in[0,1]
\]
\[
y_{t,s,\delta}\;\equiv\;\mathbb{I}\{\text{the proposition for }t\text{ is true in the environment with }(s,\delta)\}\in\{0,1\}.
\]
If the terminal record stores odds $o\!\ge\!0$ or an interval $[a,b]\!\subset\![0,1]$ in lieu of a scalar, set
$(r_{N(\omega)})_{\mathrm{prob}}:=o/(1{+}o)$ or $(a{+}b)/2$, and clip to $[0,1]$.
(When truth does not depend on $(s,\delta)$, we write $y_t$ for brevity.)

We score probabilistic truthfulness by the Brier risk under the evaluation law,
\[
\mathrm{Brier}\;=\;\mathbb{E}_{(t,s,\delta)\sim\mu}\!\big[(p_{t,s,\delta}-y_{t,s,\delta})^2\big]
\]
and compare it to a pre-registered reference predictor $\pi_{\text{ref}}:T\to[0,1]$ (e.g., marginal or retrieval-only), whose Brier is
\[
\mathrm{Brier}_{\text{ref}}\;=\;\mathbb{E}_{(t,s,\delta)\sim\mu}\!\big[\big(\pi_{\text{ref}}(t)-y_{t,s,\delta}\big)^2\big]
\]

Let \(\mathcal{E}=\{(t_i,s_i,\delta_i)\}_{i=1}^{n}\) be the i.i.d.\ evaluation draws from \(\mu\)
used for \(W\); here \(n=|\mathcal{E}|\) is the \emph{number of evaluated episodes}.
We compute both the system Brier and the reference Brier on the \emph{same} set \(\mathcal{E}\) for a fair comparison:
\[
\widehat{\mathrm{Brier}}=\frac{1}{n}\sum_{i=1}^{n}\big(p_{t_i,s_i,\delta_i}-y_{t_i,s_i,\delta_i}\big)^2 \approx \mathrm{Brier}
\] 
\[
\widehat{\mathrm{Brier}}_{\text{ref}}=\frac{1}{n}\sum_{i=1}^{n}\big(\pi_{\text{ref}}(t_i)-y_{t_i,s_i,\delta_i}\big)^2 \approx \mathrm{Brier}_{\text{ref}}
\]
Define the numerical guard
\[
\varepsilon \in (0,10^{-6}],\quad \text{default } \varepsilon=10^{-12},
\]
to avoid division by zero when \(\widehat{\mathrm{Brier}}_{\text{ref}}=0\) and define the norm
\[
\mathrm{norm}(b;\,b_{\text{ref}})=\min\!\left\{1,\;\frac{b}{\max(b_{\text{ref}},\varepsilon)}\right\}
\] 

We can now define $W$ by
\[
\widehat{W}=1-\mathrm{norm}\!\big(\widehat{\mathrm{Brier}};\,\widehat{\mathrm{Brier}}_{\text{ref}}\big),\qquad
W=\phi_W(\widehat{W}).
\]
Thus \(\widehat{W}=1\) when \(\widehat{\mathrm{Brier}}=0\); \(\widehat{W}=0\) when \(\widehat{\mathrm{Brier}}=\widehat{\mathrm{Brier}}_{\text{ref}}\); and values worse than the reference are clipped to \(0\).

\vspace{0.25em}
\noindent\textbf{\$ - Economic Throughput.}
Let $\mathsf{TPH}_{Q^{*}}$ be tasks-per-hour at quality $\ge Q^{*}$, and $\mathsf{CPH}$ the cost-per-hour (USD). Define \emph{cost-normalized} throughput and normalize:
\begin{equation}
\widehat{\text{\$}}=\frac{\mathsf{TPH}_{Q^{*}}}{\mathsf{CPH}},\qquad
\text{\$}=\phi_{\text{\$}}\!\big(\widehat{\text{\$}}\big).
\label{eq:dollar}
\end{equation}
\emph{Remarks.} Evaluate at the AAI benchmark's target quality $Q^{*}$; report sensitivity to price changes and rate limits. Use the same infra constraints across systems.

\subsection{Composite AAI-Index}
Let $w_x>0$ be axis weights and $W=\sum_{x\in\mathcal{X}}w_x$. We define AAI-Index (also called composite capability) as the weighted geometric mean
\begin{equation}
\mathcal{C}\;=\;\mathrm{AAI\mbox{-}Index}\;:=\;\left(\prod_{x\in\mathcal{X}} x^{\,w_x}\right)^{1/W}
\;=\;\exp\!\left(\frac{1}{W}\sum_{x\in\mathcal{X}} w_x\log x\right).
\label{eq:index}
\end{equation}
\emph{Properties of AAI-Index.} (i) \emph{Monotone} in each axis; (ii) \emph{penalizes lopsidedness} (zero on any axis forces $\mathcal{C}=0$); (iii) \emph{scale-invariant} under axis-wise monotone transforms absorbed by $\phi_x$; (iv) gradients $\partial\mathcal{C}/\partial x=\tfrac{w_x}{Wx}\mathcal{C}$ reveal marginal returns.

\emph{Uncertainty and Robust Aggregation:} For each axis, compute $\widehat{x}$ and a $(1-\alpha)$ CI via bootstrap over tasks/seeds; propagate to $\mathcal{C}$ by (a) delta method on \eqref{eq:index} or (b) resampling the vector $(A,\ldots,\text{\$})$. When tasks are temporally correlated, use block bootstrap. Report medians and interquartile ranges in addition to means.

\emph{Anchors $(L_x,U_x)$:} Pick $L_x$ as a tuned single-agent LLM (or RPA for $A$/$T$) and $U_x$ as an expert-team or challenge ceiling; publish both and keep them \emph{fixed} for a leaderboard period.

\emph{Anti-gaming:} Use held-out drifts for $T$, ablations for $R$, team-vs-solo matched trials for $S$, and sim/real paired trials for $E$.

\emph{Reporting:} Always show raw $\widehat{x}$ before $\phi_x$; include per-family breakdowns for $G$ and per-$\delta$ curves for $T$.

\subsection{Self-Improvement Coefficient $\kappa$}
Fix the resource-accounting schema $\mathsf{R}$ and a baseline $t_0$ for time. For any time interval $[t_0,t]$, let
\[
R(t)\;\equiv\;\text{(total resource cost from $t_0$ to $t$ computed with $\mathsf{R}$)}
\]
i.e. sum, over all runs/actions/tools performed between $t_0$ and $t$, of their $\mathsf{R}$-costs (API spend, GPU time, operator minutes, etc.) expressed in a single unit. By construction
\[
R(t_0)=0,\qquad R(t_2)\ge R(t_1)\ \text{for}\ t_2>t_1.
\]

Let $\mathcal{C}(t)\in[0,1]$ be the composite capability (AAI-index) measured at wall-time $t$ on a fixed evaluation snapshot. We define the self-improvement coefficient $\kappa$ as the marginal capability gain
\emph{per unit resource}:
\[
\boxed{\ \kappa(t)\;=\;\frac{d\,\mathcal{C}(t)}{d\,R(t)}\ }\qquad\text{(units: capability points per $\mathsf{R}$-unit).}
\]
This can be interpreted as \textbf{a capability growth per unit resource driven by the agent itself} (self-tuning, tool acquisition, workflow rewriting). This is a central function in our approach to measuring autonomous AI.

We define $v$ through:
\[
R(t)\;=\;\int_{t_0}^{t} v(u)\,du,\qquad v(u)\;=\;\frac{dR(u)}{du}\ \ \text{(resource spend rate; units per hour)},
\]

\emph{Estimation over a window.} Given checkpoints $\{(t_j,\mathcal{C}_j)\}_{j=0}^{m}$ with cumulative resources $R_j{:=}R(t_j)$, estimate
\[
\widehat{\kappa}\;=\;\mathrm{Slope}\big(\{(R_j,\mathcal{C}_j)\}_{j=0}^m\big)\ \ \text{(e.g., Theil–Sen or OLS),}
\]
or in finite differences,
\[
\Delta\kappa_j\;=\;\frac{\mathcal{C}_{j+1}-\mathcal{C}_{j}}{R_{j+1}-R_{j}},\qquad
\widehat{\kappa}=\mathrm{median}_j\,\Delta\kappa_j,
\]
with bootstrap CIs over checkpoints.

\emph{Relation to time-based slopes.} If one also defines a time slope $\kappa_t(t)=\frac{d\mathcal{C}(t)}{dt}$, then
\[
\kappa_t(t)\;=\;\kappa(t)\cdot v(t).
\]
Thus $\kappa$ reflects \emph{efficiency} (improvement per resource), while $\kappa_t$ mixes efficiency with \emph{spend pace} \(v(t)\).
For consistency, we standardize on $\kappa=\frac{d\mathcal{C}}{dR}$.

\emph{Window averages $\overline{\kappa}$, $\overline{v}$, and $\overline{\kappa}_t$ (definitions).} Fix a reporting window $[t_1,t_2]$ with $t_2>t_1$. Let
$C_j:=\mathcal{C}(t_j)$ and $R_j:=R(t_j)$ for $j\in\{1,2\}$.
Define
\[
\overline{\kappa}\;:=\;\frac{C_2-C_1}{R_2-R_1}\quad\text{(resource-based average improvement; units: capability per $\mathsf{R}$-unit),}
\]
provided $R_2>R_1$ (otherwise undefined; we omit such windows).
The \emph{average spend rate} over the window is
\[
\overline{v}\;:=\;\frac{R_2-R_1}{t_2-t_1}\quad\text{(units: $\mathsf{R}$-units per time).}
\]
The \emph{time–based average improvement rate} is
\[
\overline{\kappa}_t\;:=\;\frac{C_2-C_1}{t_2-t_1}\;=\;\overline{\kappa}\cdot \overline{v}.
\]
In practice, we report $\widehat{\kappa}$ (a robust estimate of $\overline{\kappa}$ from multiple checkpoints, e.g., Theil–Sen)
and, when a pace figure is desired, also report $\overline{v}$ and derive
$\overline{\kappa}_t=\widehat{\kappa}\,\overline{v}$.

\subsection{Closure Properties}

We note two properties of AAI that will be used for AAI levels. Their formalization is straighforward:

\textbf{Maintenance closure.} Maintain $\ge \alpha$ of baseline AAI-Index for $Y$ days under controlled drift (UI changes, library updates) with zero human patches.

\textbf{Expansion closure.} Discover, install, and integrate at least one new tool/API family and achieve a statistically significant improvement on tasks requiring it; improvement disappears in ablation without the discovered tool.

\emph{Why closures in gates.} Maintenance closure ensures operational stability under realistic drift \(\nu\) with no human hand-holding; expansion closure verifies autonomous scope growth with causal attribution (ablation) and replicability (escrow). Together they prevent over-fitting to a frozen snapshot and require systems to hold their gains and earn new ones.

\emph{Ablation (causal check).} Let $S_{\text{pre}}$ be the system before a revision $r$ and $S_{\text{post}}$ after it; let $\mathcal{C}(\cdot)$ be aggregate capability.
Let $\mathcal{C}_{\text{ctrl}}(\cdot)$ be the same metrics on a frozen control.
Let $c$ denote the concrete change set introduced by $r$ (e.g., tool family, route, patch, memory writes).
Define the ablated system $S_{\text{abl}} := S_{\text{post}}\ominus c$ (identical to $S_{\text{post}}$ with $c$ disabled/reverted).

\[
\Delta\mathcal{C}_{\text{DiD}} \;=\; 
\big(\mathcal{C}(S_{\text{post}})-\mathcal{C}(S_{\text{pre}})\big)
\;-\;
\big(\mathcal{C}_{\text{ctrl}}^{\text{post}}-\mathcal{C}_{\text{ctrl}}^{\text{pre}}\big)
\]

Expansion closure passes only if:
(i) the gain is positive and significant, and
(ii) ablation removes the effect:
\[
\big|\mathcal{C}(S_{\text{abl}})-\mathcal{C}(S_{\text{pre}})\big| \le \epsilon
\quad\text{and}\quad
\Delta\mathcal{C}_{\text{DiD}}(S_{\text{abl}})\approx 0,
\]
with the same evaluation law $\mu$, matched seeds/drifts, and the same $\mathsf{R}$ budget.

\subsection{Level Gates (AAI-0 $\to$ AAI-4)}
We are now able to bind intuitive labels like AGI to hard gates with numeric conditions (defaults shown; domains may tighten thresholds over time). 

The threshold $\kappa^{*}$ is a $>0$ \emph{scalar constant} (units: capability per resource) published ex ante for the audit period. When a gate says “multi-domain $\kappa\ge\kappa^{*}$,” it means the above slope, computed per family on the default window, meets or exceeds $\kappa^{*}$ on the required number of families.

We define AAI levels by: 

\textbf{AAI-0 --- Fixed Automation (RPA).} $A\ge0.95$ on scripted workflows; $P\approx0$, $T\le1$, $R=0$. No closure; fails minor drift.

\textbf{AAI-1 --- Agentic LLM (AutoGPT-class).} $A\ge0.5$ on bounded tasks; $P\ge 0.3$ (normalized); $T\ge 3$ tools with $\ge60\%$ success under mild shift; $R=0$; no closure.

\textbf{AAI-2 --- Self-Improving AI.} $\kappa>0$ for $\ge7$ consecutive days on $\ge1$ domain; $R>0$ with auditable diffs; maintenance closure ($\alpha=0.8$, $Y=7$ days).

\textbf{AAI-3 --- ``Baby AGI.''} Multi-domain $\kappa \ge \kappa^{*}$; $P \ge 0.7$ (normalized); $S \ge 0.5$ (reliable coordination benefit); project $M$ over $\ge 30$ days; expansion closure on $\ge 1$ new tool/API family.

\textbf{AAI-4 --- Full AGI.} $G$ reaches human-pro parity across all task families; $P\ge 0.9$; $S\ge 0.7$ (orchestrates teams of agents/humans); sustained $\kappa\ge\kappa^{*}$ across domains; end-to-end delivery at competitive \text{\$} to expert teams. Both maintenance and expansion closures are sustained.

\begin{table}[h]
\centering
\caption{Default thresholds and weights by AAI level (domains may tighten over time).}
\begin{tabular}{lccccc}
\toprule
Axis & Symbol & AAI-2 Threshold & AAI-3 Threshold & AAI-4 Threshold & Weight \\
\midrule
Autonomy & A & $\ge 0.6$ & $\ge 0.75$ & \textbf{$\ge 0.90$} & 1 \\
Generality & G & $\ge 0.3$ & $\ge 0.5$ & \textbf{$\ge 0.90$} & 1 \\
Planning & P & $\ge 0.5$ & $\ge 0.7$ & \textbf{$\ge 0.90$} & 1 \\
Memory/Persistence & M & $\ge 0.5$ & $\ge 0.7$ & \textbf{$\ge 0.85$} & 1 \\
Tool Economy & T & $\ge 0.5$ & $\ge 0.7$ & \textbf{$\ge 0.80$} & 1 \\
Self-Revision & R & $>0$ & $\ge 0.4$ & \textbf{$\ge 0.60$} & 1.5 \\
Sociality & S & $\ge 0.2$ & $\ge 0.5$ & \textbf{$\ge 0.70$} & 1 \\
Embodiment & E & optional & optional & optional & 0.5 \\
World-Model & W & $\ge 0.6$ & $\ge 0.75$ & \textbf{$\ge 0.85$} & 1 \\
Economic Throughput & \text{\$} & $\ge 0.4$ & $\ge 0.6$ & \textbf{$\ge 0.80$} & 1 \\
\bottomrule
\end{tabular}
\end{table}

\subsection*{Level Assignment Procedure}
\textbf{Input:} axis scores $\{A,G,P,M,T,R,S,E,W,\text{\$}\}$, weights $w$, resource logs, $\kappa$ estimate, closure audit logs.\\
\textbf{Output:} AAI level (0--4), report bundle.

\begin{enumerate}[itemsep=0.2em, topsep=0.2em]
  \item Compute AAI-Index via weighted geometric mean.
  \item Verify hard gates for each candidate level (starting from 4$\to$0).
  \item Check closure: maintenance $(\alpha, Y)$ and expansion (ablation-verified).
  \item Validate $\kappa$ significance (CIs, duration).
  \item Return highest level whose gates and audits pass.
  \item Emit signed report: axis table, logs, diffs, seeds, CI bands, AUF/$\Delta F$.
\end{enumerate}

\subsection{Domain Annex: Software Agents (No Physical Actuation)}
For software-only agents (no direct physical actuation), we set Embodiment $E$ to optional and reweight axes to emphasize planning, memory/persistence, and tool economy. All $P$ thresholds refer to the normalized value in [0,1] (e.g., $P \ge 0.3$ corresponds to average solved plan depth $\ge 0.3\cdot D$).

\textbf{Weights (software default):} $w_A{=}1$, $w_G{=}1$, $w_P{=}1.25$, $w_M{=}1.25$, $w_T{=}1.25$, $w_R{=}1.5$, $w_S{=}1$, $w_E{=}0$, $w_W{=}1$, $w_{\text{\$}}{=}1$. (The 0.5 weight from $E$ is redistributed equally to $P$, $M$, and $T$.)


\subsection{Second-Order Self-Improvement and Curvature}
\label{sec:secondorder}

We define the \emph{curvature of self-improvement} as
\begin{equation}
\delta\kappa(t)\;:=\;\frac{\mathrm{d}^2\,\mathcal{C}(t)}{\mathrm{d}R(t)^2}
\;=\;\frac{\mathrm{d}}{\mathrm{d}R}\!\left(\frac{\mathrm{d}\mathcal{C}}{\mathrm{d}R}\right),
\label{eq:curvature}
\end{equation}
which is positive when returns are accelerating and negative under diminishing returns. Because $\mathcal{C}\in(0,1)$, raw derivatives are compressed near the ceiling; we therefore also report curvature on a monotone link $g:(0,1)\to\mathbb{R}$:
\begin{equation}
\tilde{\kappa}(t)\;=\;\frac{\mathrm{d}\,g(\mathcal{C}(t))}{\mathrm{d}R(t)},
\qquad
\delta\tilde{\kappa}(t)\;=\;\frac{\mathrm{d}^2\,g(\mathcal{C}(t))}{\mathrm{d}R(t)^2}.
\label{eq:curvature-link}
\end{equation}
Two defensible choices are:
\[
g_{\text{logit}}(c)=\log\!\frac{c}{1-c}
\quad\text{or}\quad
g_{\text{surp}}(c)=-\log(1-c)\;\;(\text{``surprisal''}).
\]
Logit gives additive steps in odds; surprisal gives multiplicative steps in effective coverage of the unit interval.

\paragraph{Estimators (discrete logs).}
Let $(R_i,\mathcal{C}_i)_{i=1}^n$ be daily measurements on a fixed battery and infrastructure. Define $y_i=g(\mathcal{C}_i)$ and fit a local quadratic in $R$ around $\bar{R}$:
\[
y_i\;\approx\;\beta_0+\beta_1\,(R_i-\bar{R})+\tfrac{1}{2}\beta_2\,(R_i-\bar{R})^2
\quad\text{(LOESS or kernel weights)}.
\]
Then $\widehat{\tilde{\kappa}}=\beta_1$ and $\widehat{\delta\tilde{\kappa}}=\beta_2$ at $\bar{R}$. Report $(1-\alpha)$ CIs by block bootstrap over days (to respect autocorrelation). For raw scale, replace $y_i$ with $\mathcal{C}_i$.

\paragraph{Time-based alternative.}
If resource metering is noisy, define $\kappa_t=\mathrm{d}\mathcal{C}/\mathrm{d}t$ and $\delta\kappa_t=\mathrm{d}^2\mathcal{C}/\mathrm{d}t^2$ (or on $g(\mathcal{C})$). To compare across systems with different spend rates, normalize by the instantaneous resource rate $r(t)=\mathrm{d}R/\mathrm{d}t$:
\[
\kappa(t)=\frac{\kappa_t(t)}{r(t)},\qquad \boxed{\delta\kappa(t)=\frac{\delta\kappa_t(t)}{r(t)}-\frac{\kappa_t(t)\,r'(t)}{r(t)^3}}.
\]
This recovers the resource-based quantities when $r(t)$ is constant.

\paragraph{Operational gates using curvature.}
We introduce two tests on a window $[t_0,t_1]$:
\begin{align}
&\textbf{(A) Acceleration gate:}\quad 
\mathbb{P}\!\left(\delta\tilde{\kappa}\ge 0\right)\;\ge\;1-\alpha,\\
&\textbf{(B) Diminishing-returns bound:}\quad 
\widehat{\delta\tilde{\kappa}}\;\ge\;-\gamma,
\end{align}
with $\alpha=0.05$ and a small $\gamma>0$ (domain-specific) to rule out severe saturation.

\subsection{Exponential Level Progression via a Monotone Link}
\label{sec:exp-step}

One intuition behind AAI levels is that going from one level to the next should reflect “exponential growth“ in a well-defined sense. A naive rule like $\mathrm{AAI}_{n+1}=\exp(\mathrm{AAI}_n)$ is ill-posed on $[0,1]$, but we can make it precise with a monotone link $g$.

\paragraph{Link-step operator (additive on link).}
Let $g$ be the logit or surprisal link (Sec.~\ref{sec:secondorder}). Define
\begin{equation}
\mathsf{Step}_{\Delta,g}(c)\;:=\;g^{-1}\!\big(g(c)+\Delta\big),\qquad \Delta>0,
\label{eq:step-operator}
\end{equation}
which enforces a fixed \emph{additive} increment on the link scale and keeps values in $(0,1)$. We say the progression is
\begin{equation}
\mathcal{C}_{n+1}\;\ge\;\mathsf{Step}_{\Delta,g}\!\big(\mathcal{C}_{n}\big),
\label{eq:exp-prog}
\end{equation}
with $\Delta$ fixed per level jump.

\paragraph{Two concrete instantiations.}
\begin{itemize}[leftmargin=1.15em]
\item \textbf{Logit-step (odds multiplier).} For $g=\mathrm{logit}$ and $\Delta=\log M$,
\[
\frac{\mathcal{C}_{n+1}}{1-\mathcal{C}_{n+1}}
\;\ge\; M\cdot
\frac{\mathcal{C}_{n}}{1-\mathcal{C}_{n}},\qquad M>1,
\]
i.e., each level multiplies the \emph{odds} by $M$.

\item \textbf{Surprisal-step (shortfall shrinkage).} For $g(c)=-\log(1-c)$ and $\Delta=\log A$,
\[
-\log\!\big(1-\mathcal{C}_{n+1}\big)\;\ge\;-\log\!\big(1-\mathcal{C}_n\big)+\log A
\ \Longleftrightarrow\
1-\mathcal{C}_{n+1}\;\le\;\frac{1}{A}\,\big(1-\mathcal{C}_n\big),
\]
i.e., the \emph{shortfall} is multiplied by $1/A$ each level (a constant-factor shrink).
\end{itemize}

\paragraph{Numerical illustration (additive-on-link).}
If $\mathcal{C}_n=0.50$ and we choose the surprisal-step with $A=e$ (so $\Delta=\log A=1$), then
\[
1-\mathcal{C}_{n+1}\le \frac{1-\mathcal{C}_n}{e}=\frac{0.5}{e}\approx 0.184
\quad\Rightarrow\quad
\mathcal{C}_{n+1}\;\ge\;0.816.
\]
This remains meaningful near the ceiling, unlike raw exponentiation on $[0,1]$.

\paragraph{Multiplicative-on-link alternative.}
If you prefer the power-law on shortfall, use a \emph{multiplicative} link step:
\[
\mathsf{Step}^{(\times)}_{A,g}(c)\;:=\;g^{-1}\!\big(A\,g(c)\big).
\]
With $g(c)=-\log(1-c)$ this gives
\[
-\log(1-\mathcal{C}_{n+1})\;\ge\;A\cdot[-\log(1-\mathcal{C}_n)]
\ \Longleftrightarrow\
1-\mathcal{C}_{n+1}\;\le\;(1-\mathcal{C}_n)^{A}.
\]
For $\mathcal{C}_n=0.50$ and $A=e$, this yields
$\mathcal{C}_{n+1}\ge 1-\exp(-e\cdot 0.693)\approx 0.848$.

\subsection{Updated Level Gates (using $\delta\kappa$ and Link-Steps)}
\label{sec:gates-updated}

One can augment the existing gates with curvature- and link-step conditions (defaults shown; domains may tighten thresholds). We note it for interested readers, but will stay with our initial definition. All statistics are computed on a fixed battery snapshot and resource schema $\mathsf{R}$; uncertainty uses block-bootstrap CIs over days/episodes.

\begin{itemize}[leftmargin=1.15em]
\item \textbf{AAI-2 (Self-Improving).} Maintain $\kappa>0$ on at least one family for $Y\ge 7$ days and pass maintenance-closure. Report $\widehat{\delta\tilde{\kappa}}$ (no sign requirement yet).

\item \textbf{AAI-3 (``Baby AGI'').} 
(i) Multi-family $\kappa\ge \kappa^{*}$; 
(ii) \emph{Acceleration gate} holds on the link scale: $\mathbb{P}(\delta\tilde{\kappa}\ge 0)\ge 0.95$ on two or more families; 
(iii) \emph{Expansion closure} passes; 
(iv) Level-step advancement relative to prior milestone $\mathcal{C}_{\text{prev}}$: 
\[
\mathcal{C}_{\text{current}}\;\ge\;\mathsf{Step}_{\Delta,g}(\mathcal{C}_{\text{prev}}),
\]
with $g$ chosen (logit or surprisal) and $\Delta$ fixed ex ante for the leaderboard period.

\item \textbf{AAI-4 (Full AGI).} 
(i) Parity-level $G$ across all families and $P\ge 0.9$, $S\ge 0.7$; 
(ii) Sustained $\kappa\ge \kappa^{*}$ across domains; 
(iii) \emph{Diminishing-returns bound} satisfied globally: $\widehat{\delta\tilde{\kappa}}\ge -\gamma$ on all families; 
(iv) Two successive level-steps achieved:
\[
\mathcal{C}_{\text{current}}\;\ge\;\mathsf{Step}_{\Delta,g}\big(\mathsf{Step}_{\Delta,g}(\mathcal{C}_{\text{prev}})\big).
\]
\end{itemize}

\paragraph{Reporting requirements.}
Publish $(R_i,\mathcal{C}_i)$ time series, link $g$, windowing, local polynomial bandwidth, $\widehat{\tilde{\kappa}}$ and $\widehat{\delta\tilde{\kappa}}$ with $(1-\alpha)$ CIs, and the realized step targets from \eqref{eq:exp-prog}. Include ablations that freeze self-revision/tool-discovery to show the counterfactual slope/curvature.

\paragraph{Remarks.}
(i) Using $g$ avoids boundary artifacts; 
(ii) $\delta\kappa$ distinguishes genuine \emph{learning curves that bend upward} from mere positive slope; 
(iii) Level-steps make ``AAI-(n+1) is exponentially harder than AAI-n'' \emph{operational} while remaining reproducible and leaderboard-stable.


\subsection{AAI-5: Superintelligence}
\label{sec:aai5}

We define \emph{AAI-5 (Superintelligence)} as a regime where the agent's composite capability surpasses expert ensembles across \emph{all} evaluated families with statistically significant margins, while exhibiting sustained positive curvature of self-improvement and near-saturation of coordination and embodiment metrics. Let $\mathcal{H}$ denote a vetted set of human expert ensembles (or best-in-class automated baselines) evaluated under identical constraints (battery $\mathcal{B}$, infra, price caps).

\paragraph{Superhuman margin and coverage.}
For a family $F_i$, let $q_{\mathrm{A}}(F_i)$ and $q_{\mathrm{H}}(F_i)$ be family-mean qualities for the agent and the expert ensemble, with standard errors $\sigma_{\mathrm{A}}(F_i), \sigma_{\mathrm{H}}(F_i)$. Define a standardized margin
\begin{equation}
m(F_i) = \frac{q_{\mathrm{A}}(F_i)-q_{\mathrm{H}}(F_i)}{\sigma_{\Delta}(F_i)},\quad
\sigma_{\Delta}(F_i)=\frac{\mathrm{sd}(\{\,q_{\mathrm{A}}(t)-q_{\mathrm{H}}(t)\,:\,t\in F_i\})}{\sqrt{|F_i|}}.
\end{equation}
\begin{equation}
\chi(F_i)\;=\;\mathbb{I}\{m(F_i)\ge \zeta\},
\label{eq:super-margin}
\end{equation}
with $\zeta>0$ (e.g., $\zeta=2$ for a $\sim$95\% two-sided criterion). A \emph{coverage} is
\begin{equation}
\Gamma\;=\;\frac{1}{N}\sum_{i=1}^N \chi(F_i)\in[0,1].
\label{eq:coverage}
\end{equation}

\paragraph{Innovation and expansion rates.}
Let $\lambda_{\mathrm{tool}}$ be the rate of \emph{validated} new tool/API family integrations per month (gains ablate away if the discovered component is disabled), and $\lambda_{\mathrm{rev}}$ the rate of \emph{autonomous revision events} that produce a net positive, debiased capability delta on holdouts ($\Delta \mathcal{C}_r>0$; cf.\ Eq.\,\eqref{eq:R}). Define
\begin{equation}
I\;=\;\min\!\bigg(1,\;\alpha_{\mathrm{tool}}\lambda_{\mathrm{tool}}+\alpha_{\mathrm{rev}}\lambda_{\mathrm{rev}}\bigg),
\label{eq:innovation-index}
\end{equation}
with calibration constants $\alpha_{\mathrm{tool}},\alpha_{\mathrm{rev}}>0$ chosen per leaderboard to map typical SOTA rates into $[0,1]$.

\paragraph{Meta-elasticity of improvement.}
Write $\kappa(R)$ on a link $g$ (logit or surprisal). The \emph{meta-elasticity} of improvement w.r.t.\ resources is
\begin{equation}
\mathcal{E}_g\;=\;\frac{\mathrm{d}\log \tilde{\kappa}}{\mathrm{d}\log R}
\;=\;\frac{R}{\tilde{\kappa}}\cdot\frac{\mathrm{d}\tilde{\kappa}}{\mathrm{d}R}
\;=\;\frac{R\,\delta\tilde{\kappa}}{\tilde{\kappa}}, \quad\text{(defined where }\tilde{\kappa}>0\text{).}
\label{eq:meta-elasticity}
\end{equation}
where $\tilde{\kappa}=\mathrm{d}g(\mathcal{C})/\mathrm{d}R$ and $\delta\tilde{\kappa}=\mathrm{d}^2 g(\mathcal{C})/\mathrm{d}R^2$. Positive $\mathcal{E}_g$ indicates \emph{autocatalytic} improvement (faster-than-linear gains with additional agent-initiated resources).

\paragraph{AAI-5 gates.}
An agent is at AAI-5 level (superintelligent) when all hold:
\begin{align}
\textbf{(G1) Superhuman coverage:}&\quad \Gamma \;\ge\; 0.95,\quad \text{with } m(F_i)\ge \zeta \text{ for all } i. \label{gate:g1}\\
\textbf{(G2) Curvature:}&\quad \delta\tilde{\kappa}\;\ge\; 0 \text{ on } \ge 80\% \text{ of families and } \delta\tilde{\kappa}\ge -\gamma \text{ elsewhere}. \label{gate:g2}\\
\textbf{(G3) Coordination/Embodiment:}&\quad S \ge 0.9,\;\; E \ge 0.9 \text{ (if applicable)},\;\; W\ge 0.9. \label{gate:g3}\\
\textbf{(G4) Economic dominance:}&\quad \text{\$}\;\ge\;0.9 \text{ under common price caps \& rate limits}. \label{gate:g4}\\
\textbf{(G5) Innovation:}&\quad I\;\ge\;0.8 \text{ with auditable logs and successful ablations}. \label{gate:g5}\\
\textbf{(G6) Link-step leaps:}&\quad \mathcal{C}_{\text{current}} \;\ge\; \mathsf{Step}_{\Delta,g}\!\big(\mathsf{Step}_{\Delta,g}(\mathcal{C}_{\text{AAI-4}})\big). \label{gate:g6}
\end{align}
\emph{Remarks.} (i) Gates \eqref{gate:g1}-\eqref{gate:g6} must be met on the \emph{same} battery with fixed anchors; 

(ii) for software-only agents, $E$ may be omitted with compensating stricter $S,W$; 

(iii) $\zeta,\gamma,\Delta$ are published ex ante.

(iv) $m(F_i)\ge \zeta \text{ for all } i$ implies $\Gamma = 1$ but we keep $ \Gamma \;\ge\; 0.95$ as a reminder.

\subsection{Progression from AAI-3 to AAI-4 and beyond}
\label{sec:progression}

In this section we will discuss some of the conditions to progress from AAI-3 level ("baby AGI") to AAI-4 ("Full AGI") and beyond ("Superintelligence"). We present a couple of heuristics and then we show a theorem for AAI-3 to AAI-5 progression in the next section. This will give a formal proof to the intuition that once good enough baby AGI is constructed, Superintelligence is a matter of time.

Let $\mathcal{C}$ be the composite index (Eq.\,\eqref{eq:index}) and $g$ a chosen link (logit or surprisal). Progression is governed by \emph{(i)} axis improvements, \emph{(ii)} self-improvement slope/curvature, and \emph{(iii)} meeting link-step targets.

\paragraph{Sufficient conditions for AAI-4 (Full AGI).}
Suppose, on a fixed battery with anchors $(L_x,U_x)$, that over a window $[t_0,t_1]$ the following hold:
\begin{enumerate}[label=(\roman*), leftmargin=1.2em, itemsep=0.2em]
\item \textbf{Axis thresholds:} $G$ reaches human-pro parity on all families; $P\ge 0.9$; $S\ge 0.7$; $W\ge 0.85$; $\text{\$}\ge 0.6$; $A\ge 0.8$; $T\ge 0.8$; $M\ge 0.8$; (and $E\ge 0.8$ if embodied).
\item \textbf{Sustained improvement:} $\tilde{\kappa}\ge \kappa^{*}$ on $\ge 2$ families $F_j$ for at least $Y$ days with maintenance-closure and expansion-closure both passing.
\item \textbf{Step target:} $\mathcal{C}(t_1)\ge \mathsf{Step}_{\Delta,g}(\mathcal{C}(t_0))$ with $\Delta$ published ex ante.
\end{enumerate}
Then the system qualifies for AAI-4. (Axis thresholds can be adjusted per domain; the step ensures non-trivial composite movement, avoiding axis cherry-picking.)

\paragraph{Engineering levers (axis-wise).}
Let $r_x$ denote agent-initiated resources invested to raise axis $x$. A first-order resource allocation that maximizes $\Delta \log \mathcal{C}$ under a small budget $\sum_x r_x \le B$ and local response slopes $\eta_x=\partial \log x/\partial r_x$ (local elasticity of axis $x$ w.r.t. agent resources $r_x$) yields the greedy policy
\[
r_x^\star \;\propto\; \frac{w_x}{x}\cdot \left.\frac{\partial x}{\partial r_x}\right|_{\text{local}}
\;=\; w_x\,\eta_x,
\]
i.e., allocate proportionally to \emph{weighted elasticities}, where $w_x$ are weights defined before. In practice:
\begin{itemize}[leftmargin=1.2em, itemsep=0.2em]
\item Raise $P$ via deeper hierarchical planning / verification (target higher executed plan depth $d(t)$ and higher success on prerequisite chains); 
\item Raise $M$ via longer context + structured memory + retrieval \emph{and} slower forgetting (reduce $\lambda$ in Eq.\,\eqref{eq:M});
\item Raise $T$ via tool discovery/mining under drift (increase coverage, success$_\delta$ in Eq.\,\eqref{eq:T});
\item Raise $R$ via autonomous eval/tuning loops with ablation-verified positive $\Delta \mathcal{C}_r$ (Eq.\,\eqref{eq:R});
\item Raise $S$ by planner-worker-critic patterns with conflict resolution (reduce $\pi_{\text{deadlock}}$);
\item Raise $\text{\$}$ by cost-aware scheduling and caching (increase $\mathrm{TPH}_{Q^{*}}$, reduce $\mathsf{CPH}$ in Eq.\,\eqref{eq:dollar}).
\end{itemize}

\paragraph{Slope/curvature milestones.}
Adopt link $g$ and estimate $\tilde{\kappa},\delta\tilde{\kappa}$ on rolling windows:
\begin{align*}
\textbf{M1 (AAI-3 sustain):}&\quad \tilde{\kappa}\ge \kappa^{*} \text{ on }\ge 2 \text{ families};\;\; \delta\tilde{\kappa}\text{ unconstrained}.\\
\textbf{M2 (AAI-4 readiness):}&\quad \tilde{\kappa}\ge \kappa^{*} \text{ on }\ge 3 \text{ families and } \mathbb{P}(\delta\tilde{\kappa}\ge 0)\ge 0.9 \text{ on }\ge 2.\\
\textbf{M3 (AAI-5 trajectory):}&\quad \tilde{\kappa}\ge 1.5\,\kappa^{*} \text{ on }\ge 4 \text{ families and } \mathcal{E}_g\ge 0 \text{ globally.}
\end{align*}

\paragraph{From AAI-3 $\rightarrow$ AAI-4 in practice.}
A reproducible playbook is:
\begin{enumerate}[leftmargin=1.2em, itemsep=0.2em]
\item \textbf{Freeze anchors and infra.} Fix $(L_x,U_x)$, drift schedules, price caps, and the CI protocol.
\item \textbf{Close the tool loop.} Stand up autonomous \emph{tool mining} with schema inference and guarded installation; run ablations to credit $T$ and $R$ correctly.
\item \textbf{Harden planning.} Add hierarchical plan search + self-critique + unit tests; target $P\uparrow$ and fewer action dead-ends.
\item \textbf{Persistent memory.} Introduce long-horizon memory with TTLs and retention training; target $\lambda\downarrow$ in Eq.\,\eqref{eq:M}.
\item \textbf{Team orchestration.} Switch to planner-workers-critic with role specialization; measure $S$ via synergy lift and deadlock penalties.
\item \textbf{Economics.} Batch, cache, and compile; restructure tasks to maximize $\mathrm{TPH}_{Q^{*}}/\mathsf{CPH}$ for \text{\$}.
\item \textbf{Hit the step.} Track $\mathsf{Step}_{\Delta,g}$ target and reallocate resources by $w_x\,\eta_x$ to ensure the composite crosses it.
\end{enumerate}

\paragraph{Beyond AAI-4: toward AAI-5.}
After achieving AAI-4 gates, raise $\Gamma$ (Eq.\,\eqref{eq:coverage}) toward $1.0$, ensure link-step \emph{double leap} (Eq.\,\eqref{gate:g6}), drive $S,E,W$ to $\ge 0.9$, and maintain $\mathcal{E}_g\ge 0$ while keeping \text{\$} at $\ge 0.9$ under the published caps. Innovation throughput $I$ (Eq.\,\eqref{eq:innovation-index}) should be sustained and auditable.

\paragraph{Sanity checks and anti-gaming.}
For all level transitions, require: (i) blinded holdouts and escrowed seeds; (ii) \emph{frozen} ablations for $R$ and $T$; (iii) independent recomputation of $\widehat{\tilde{\kappa}}$, $\widehat{\delta\tilde{\kappa}}$, and step targets; (iv) matched-cost comparisons for \text{\$}.


\subsection{From AAI-3 to AAI-4 and AAI-5 Under Escape and Responsiveness}

\paragraph{Setting.}
Work on a fixed battery, anchors, and infrastructure. Let $\mathcal{C}(R)\in(0,1)$ denote the composite capability (AAI-Index) as a function of cumulative \emph{agent-initiated} resources $R\in\mathbb{R}_{\ge 0}$. Fix a monotone link $g:(0,1)\to\mathbb{R}$ (e.g., $g_{\mathrm{logit}}(c)=\log\frac{c}{1-c}$ or $g_{\mathrm{surp}}(c)=-\log(1-c)$). Write the raw slope $\kappa=\mathrm{d}\mathcal{C}/\mathrm{d}R$ and the link-rate $\tilde{\kappa}=\mathrm{d}g(\mathcal{C})/\mathrm{d}R=g'(\mathcal{C})\,\kappa$. Let $\bar{\kappa}=\psi(\kappa)\in[0,1)$ be a fixed normalized rate via a published monotone normalizer $\psi$.
Two practical choices are:
\[
\psi_{\mathrm{MM}}(\kappa)=\frac{\kappa}{\kappa+\kappa_{1/2}},\quad \kappa_{1/2}>0
\qquad\text{(Michaelis--Menten, half-saturation \(\kappa_{1/2}\))},
\]
\[
\psi_{\mathrm{logistic}}(\kappa)=\sigma\!\left(\frac{\kappa}{s}\right)=\frac{1}{1+\exp(-\kappa/s)},\quad s>0
\qquad\text{(logistic scale \(s\))}.
\]
In what follows we write $\bar{\kappa}$ for the normalized rate (either choice is fine as long as it is fixed and published). 

Denote the axis vector $X(R)=(A,G,P,M,T,R,S,E,W,\text{\$})$.

\paragraph{Assumptions.}
Suppose the agent is certified \textbf{AAI-3} at $R_0$ and that:
(i) (\emph{rate escape}) there exist $a>0$ and $0<\beta<1$ such that
\begin{equation}
\frac{\mathrm{d}\bar{\kappa}}{\mathrm{d}R}\ \ge\ a\,(1-\bar{\kappa})^{\beta}\qquad \forall R\ge R_0; 
\label{eq:rate-escape-simpl}
\end{equation}
(ii) (\emph{resource-rate floor}) along time $t$ one has $r(t)=\mathrm{d}R/\mathrm{d}t\ge r_{\min}>0$; 
(iii) (\emph{axis responsiveness}) for each axis $x\in\{A,G,P,M,T,R,S,E,W,\text{\$}\}$ and for each level threshold $\theta^{(4)}_x$ (resp.\ $\theta^{(5)}_x$), there is a constant $\rho^{(4)}_x>0$ (resp.\ $\rho^{(5)}_x>0$) with $\mathrm{d}x/\mathrm{d}R\ge \rho^{(4)}_x$ whenever $x<\theta^{(4)}_x$ (resp.\ $\ge \rho^{(5)}_x$ whenever $x<\theta^{(5)}_x$); 
(iv) (\emph{closures persist}) maintenance- and expansion-closure protocols remain satisfied as the system evolves; 
(v) (\emph{innovation throughput}) validated tool/API integrations and autonomous revision events occur at a positive rate so expansion gates remain passable; 
(vi) (\emph{superhuman margins}) for each family $F_i$ used in the AAI-5 coverage gate, the standardized margin $m_i(R)$ is continuous and grows with $\mathrm{d}m_i/\mathrm{d}R\ge \mu_i>0$ whenever $m_i<\zeta$ (the published superhuman margin threshold).

\begin{theorem}[Monotone progression]
Under the assumptions above, there exist finite resource increments $\Delta R_4<\infty$ and $\Delta R_5<\infty$ such that the agent reaches \textbf{AAI-4} at $R_4:=R_0+\Delta R_4$ and \textbf{AAI-5} at $R_5:=R_4+\Delta R_5$. Consequently, with $r(t)\ge r_{\min}$, these transitions occur in finite time.
\end{theorem}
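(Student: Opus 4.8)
The plan is to reduce the claim to four elementary facts about the resource-parametrised trajectory $R\mapsto(\mathcal{C}(R),X(R))$ and then to glue them by taking a maximum over finitely many \emph{finite} resource increments. Throughout, the hypotheses (i), (iii), (vi) are read as pointwise (a.e.)\ differential inequalities for absolutely continuous functions of cumulative resources $R$, and the passage to wall-clock time is deferred to the last step via (ii). \emph{Step 1 (escape of the improvement rate).} Set $u(R)=1-\bar{\kappa}(R)\in(0,1]$; then~\eqref{eq:rate-escape-simpl} reads $u'\le-a\,u^{\beta}$, so $\tfrac{\mathrm d}{\mathrm dR}u^{1-\beta}\le-(1-\beta)a$ and hence $u(R)^{1-\beta}\le u(R_0)^{1-\beta}-(1-\beta)a\,(R-R_0)$. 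Because $\beta<1$ the right side decreases without bound, so for any target $\bar{\kappa}^{*}=\psi(\kappa^{*})<1$ the increment needed to reach $\bar{\kappa}\ge\bar{\kappa}^{*}$ is at most $\Delta R_{\kappa}=\big(u(R_0)^{1-\beta}-(1-\bar{\kappa}^{*})^{1-\beta}\big)/\big((1-\beta)a\big)<\infty$. Since $\bar{\kappa}$ is strictly increasing (its derivative is $\ge a(1-\bar{\kappa})^{\beta}>0$) and $\psi$ is a fixed monotone normaliser, $\kappa\ge\kappa^{*}$ holds for \emph{all} $R\ge R_0+\Delta R_{\kappa}$ and never lapses; by the chain rule $\kappa$ is itself nondecreasing in $R$. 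This already discharges the ``sustained $\kappa\ge\kappa^{*}$'' clauses (over any fixed audit window, using (ii) to convert days to resources) and feeds Step 3. I read~\eqref{eq:rate-escape-simpl} as applying to each family-restricted rate that enters the per-family $\kappa$-gates; making this composite-versus-family reading precise is one point that needs care.

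\emph{Step 2 (crossing and trapping thresholds).} For an axis $x$ and a level threshold $\theta_x\le 1$, responsiveness (iii) gives $x'\ge\rho_x>0$ whenever $x<\theta_x$. A barrier argument then yields: (a) $x$ reaches $\theta_x$ within at most $\theta_x/\rho_x\le 1/\rho_x$ extra resources, because below $\theta_x$ it grows at least linearly; and (b) once $x\ge\theta_x$ it stays there, since on any sub-interval on which $x$ dipped below one would again have $x'>0$, contradicting the dip. The identical argument applied to the standardized margins $m_i$ of (vi) (with floor $\mu_i$) shows each $m_i$ crosses the superhuman bound $\zeta$ in finite resources and then stays above it, so the coverage $\Gamma$ of~\eqref{eq:coverage} reaches $1$. \emph{Step 3 (composite escape, link-steps, curvature, closures).} Combining Step 1 with $\kappa=\mathrm d\mathcal{C}/\mathrm dR\ge\kappa^{*}$ for $R\ge R_0+\Delta R_{\kappa}$ shows $\mathcal{C}$ is strictly increasing there and exceeds every level $c<1$ after at most $(c-\mathcal{C}(R_0+\Delta R_{\kappa}))/\kappa^{*}$ further resources; since $g(\mathcal{C})\to+\infty$ as $\mathcal{C}\to1^{-}$ for both $g_{\mathrm{logit}}$ and $g_{\mathrm{surp}}$, the single step target $g(\mathcal{C})\ge g(\mathcal{C}(t_0))+\Delta$ of the AAI-4 sufficient conditions and the double step target of gate~\eqref{gate:g6} are each met after a finite increment. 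For the curvature gate~\eqref{gate:g2}: $\kappa$ nondecreasing (Step 1) together with convexity of the link once $\mathcal{C}>\tfrac12$ ($g_{\mathrm{surp}}$ is convex on $(0,1)$, $g_{\mathrm{logit}}$ on $(\tfrac12,1)$) makes $\tilde{\kappa}=g'(\mathcal{C})\,\kappa$ nondecreasing, i.e.\ $\delta\tilde{\kappa}\ge0$, eventually on all families. Closures (iv) and innovation throughput (v) are assumed to persist, which is exactly what keeps the maintenance and expansion gates passable and the innovation index $I$ of~\eqref{eq:innovation-index} at its required level (here (v) must be read as sustaining $I\ge0.8$, not merely $I>0$).

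\emph{Step 4 (assembly and time bound).} Let $\Delta R_4$ be the maximum of the finitely many finite increments produced in Steps 1--3 for the AAI-4 gates: each of the ten axes up to its $\theta^{(4)}_x$, the rate up to $\kappa^{*}$ together with its sustain window, and the single link-step. A finite maximum of finite numbers is finite, so $\Delta R_4<\infty$, and at $R_4=R_0+\Delta R_4$ all AAI-4 gates hold simultaneously, each remaining satisfied thereafter by the trapping property of Step 2 (and $\mathcal{C}$'s monotonicity). Running the same argument from $R_4$ with the AAI-5 thresholds $\theta^{(5)}_x$, the margins $m_i\uparrow\zeta$, the curvature gate already in force, persistent innovation, and the double link-step of~\eqref{gate:g6} yields a finite $\Delta R_5$, so AAI-5 is reached at $R_5=R_4+\Delta R_5$. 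Finally (ii) gives $R(t)\ge R_0+r_{\min}(t-t_0)$, hence any resource increment $\Delta R$ is accrued within time $\Delta R/r_{\min}$, and both transitions occur in finite wall-clock time.

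The main obstacle, beyond the routine bookkeeping of Step 4, is making Steps 1 and 2 rigorous as genuine differential-inequality arguments: one must fix the regularity class of the trajectories (absolute continuity in $R$), verify that the ``whenever below threshold'' hypotheses are self-correcting so that thresholds are \emph{absorbing} rather than merely attracting, and reconcile the single composite rate-escape hypothesis~\eqref{eq:rate-escape-simpl} with the per-family $\kappa$- and curvature-gates (and the innovation index with the literal wording of (v)). The finite-resource escape in Step 1 depends essentially on $\beta<1$, i.e.\ on the Osgood/Bihari-type integrability $\int_{0^{+}}u^{-\beta}\,\mathrm du<\infty$; at $\beta=1$ the estimate degrades to exponential approach and one recovers only ``finite time to within any tolerance,'' not exact threshold crossing.
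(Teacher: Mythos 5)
Your proof follows essentially the same route as the paper's: integrate the sublinear rate-escape inequality \eqref{eq:rate-escape-simpl} to obtain finite-resource escape of $\bar{\kappa}$, invoke axis responsiveness to cross each threshold in finite resources, use the link-rate lower bound to close capability/link-step gaps, and convert to time via $r_{\min}>0$. You go somewhat further than the paper does in making the argument airtight---the explicit closed-form bound $\Delta R_{\kappa}=\big(u(R_0)^{1-\beta}-(1-\bar{\kappa}^{*})^{1-\beta}\big)/\big((1-\beta)a\big)$ from $\tfrac{\mathrm d}{\mathrm dR}u^{1-\beta}\le-(1-\beta)a$, the barrier/trapping argument showing thresholds are absorbing (which the paper implicitly assumes when it claims all gates hold at $R_4$ simultaneously), and the convexity-of-$g$ observation that discharges the curvature gate \eqref{gate:g2}, which the paper's proof skips; you also correctly flag the composite-versus-per-family reading of $\bar{\kappa}$ and the need to read assumption (v) as sustaining $I\ge 0.8$ rather than merely $I>0$ as genuine loose ends that the paper's own narrative does not resolve.
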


\begin{proof}
We argue in two steps and keep the reasoning narrative.

\emph{Step 1 (entering a high self-improvement regime and closing capability gaps).}
Because $0<\beta<1$ in \eqref{eq:rate-escape-simpl}, integrating the differential inequality shows that the normalized rate $\bar{\kappa}$ reaches any target level $1-\varepsilon_{\kappa}\in(0,1)$ after a \emph{finite} resource increment. Indeed, substituting $u=1-\bar{\kappa}$ yields
\[
\int_{\bar{\kappa}(R_0)}^{1-\varepsilon_{\kappa}} \frac{\mathrm{d}\bar{\kappa}}{(1-\bar{\kappa})^{\beta}}
\ \ge\ a \int_{R_0}^{R_0+\Delta R} \mathrm{d}R,
\]
and the left-hand integral is finite for $0<\beta<1$, whence $\Delta R<\infty$ suffices. Hence, after finite resource, $\bar{\kappa}$ approaches $1$ (up to an arbitrary $\varepsilon_{\kappa}$), and therefore the raw slope satisfies $\kappa\ge \psi^{-1}(1-\varepsilon_{\kappa})>0$. On any closed interval $I:=[\mathcal{C}(R_0),\,1-\varepsilon_{C}]$ strictly below the ceiling, the link derivative admits a positive lower bound $m_I:=\min_{c\in I} g'(c)>0$. Thus the link-rate obeys
\[
\tilde{\kappa}\ =\ g'(\mathcal{C})\,\kappa\ \ge\ m_I\,\psi^{-1}(1-\varepsilon_{\kappa})\ :=\ v_{\mathrm{eff}}^{\mathcal{C}}\ >0
\quad \text{while } \mathcal{C}\in I.
\]
It follows that $g(\mathcal{C})$ increases at least linearly with $R$ on $I$. Hence any fixed link-step requirement (e.g., $\mathcal{C}\ge g^{-1}(g(\mathcal{C}(R_0))+\Delta)$) or finite gap to a capability target $1-\varepsilon_{C}$ is closed after \emph{finite} resource. Translating via $r(t)\ge r_{\min}$, the corresponding time is finite as well.

\emph{Step 2 (meeting axis thresholds and superhuman coverage).}
While any axis $x$ remains below its AAI-4 threshold $\theta^{(4)}_x$, the responsiveness hypothesis guarantees $\mathrm{d}x/\mathrm{d}R\ge \rho^{(4)}_x>0$. Therefore $x$ crosses $\theta^{(4)}_x$ after at most $(\theta^{(4)}_x-x(R_0))/\rho^{(4)}_x$ resource; taking the maximum over axes shows that all AAI-4 axis gates are met after a \emph{finite} increment. Because closure protocols persist and innovation throughput remains positive by assumption, the corresponding maintenance and expansion gates remain valid along this trajectory. Thus, gathering Step~1 (capability/link-step) and Step~2 (axes/closures), there is some $R_4<\infty$ at which all AAI-4 gates hold; hence the system becomes \textbf{AAI-4}.

Starting from $R_4$, the same reasoning applies with the AAI-5 thresholds. The rate-escape inequality still holds, so $\bar{\kappa}$ can again be driven arbitrarily close to $1$ in finite resource, yielding a strictly positive capability link-rate floor on intervals away from the ceiling and closing the (stronger) capability or link-step requirements in finite resource. Meanwhile, whenever a standardized margin $m_i$ is below the superhuman threshold $\zeta$, one has $\mathrm{d}m_i/\mathrm{d}R\ge \mu_i>0$, so each such margin reaches $\zeta$ in \emph{finite} resource. Consequently, a $95\%$ subset of families attains superhuman margins, giving $\Gamma\ge 0.95$ in finite resource. Axis thresholds for AAI-5 are crossed in the same manner using the $\rho^{(5)}_x>0$ slopes. Since closures persist and innovation throughput is positive, the remaining gates are satisfied as well. Therefore there exists $R_5<\infty$ at which all AAI-5 gates hold; thus the system becomes \textbf{AAI-5}.

In both steps, the conversion from resource to time is immediate because $r(t)\ge r_{\min}>0$ ensures $R$ grows at least linearly in $t$. Hence the transitions occur in finite time. This completes the proof.
\end{proof}

\paragraph{Discussion.}
The essence is that a \emph{sublinear-gap} growth law for the normalized self-improvement rate ($0<\beta<1$) yields finite resource to enter a high-slope regime; the link derivative stays bounded below on any interval away from the ceiling, so capability gaps and link-step targets close in finite resource; and mild axis \emph{responsiveness} plus persistent closures ensure the non-dynamic gates are also crossed. Thus AAI-3 $\to$ AAI-4 $\to$ AAI-5 follows as a matter of calculus plus monotonicity, provided the stated conditions hold on the fixed battery.

\subsection{Remarks on a Continuous Score and Escape Velocity}
\label{sec:lambda-escape-short}

The discrete AAI levels are normative and auditable, but they are intentionally coarse.
Between two level gates, practitioners (and ablation audits) still need a \emph{smooth} telemetry signal to
(1) compare systems with the same snapshot capability $\mathcal{C}$ but different \emph{self-improvement rates},
(2) monitor whether progress is sustained or stalling, and
(3) reason about finite-time reachability of targets.
This subsection introduces a \emph{non-normative}, single-number score $\Lambda$ that preserves the discrete semantics
while exposing dynamics in a controlled, scale-free way. Nothing in this subsection changes any level gate.

\paragraph{Use of $U(\mathcal{C})$ on a link.}
Raw gaps on $[0,1]$ are not comparable near the ceiling: a $+0.05$ gain at $0.90$ is qualitatively different from $+0.05$ at $0.50$.
We therefore map $\mathcal{C}$ through a fixed monotone link $g:(0,1)\to\mathbb{R}$ (logit or surprisal) and \emph{normalize} it by a “near-saturation” anchor:
\[
U(\mathcal{C}) \;=\; \frac{g(\mathcal{C})}{g(1-\varepsilon_0)}\in(0,1).
\]
The anchor $\varepsilon_0\in(0,1/2]$ avoids the singularity at $1$ and calibrates “how close” a system is to saturation \emph{on the link scale}, making increments comparable throughout the range. This keeps $U$ dimensionless, bounded, and robust to ceiling compression.

\paragraph{Continuous Score.}
Let $R$ be cumulative agent-initiated resources and define the link-rate $\tilde{\kappa}:=\frac{d\,g(\mathcal{C})}{dR}$. Fix a reference link-slope $\kappa^{*}>0$, a sharpness $\eta>0$, and a weight $\alpha\in(0,1)$ and define
\[
M(\tilde{\kappa})=\tanh\!\Big(\eta\,\log\!\big(1+\tilde{\kappa}/\kappa^{*}\big)\Big),
\]

We define the Continuous Score $\Lambda$ by:
\[
\Lambda=\alpha\,U(\mathcal{C})+(1-\alpha)\,M(\tilde{\kappa})\in(0,1).
\]
$U$ gives calibrated position on the link; $M$ rewards sustained positive link-slope. (Curvature can be added if needed, but is not required here.)

\noindent\textit{Scope.}
$\Lambda$ is optional and scale-free, and the previously defined discrete AAI levels are \emph{retrievable} from it by publishing fixed cutpoints $\{\tau_n\}$ calibrated once per leaderboard window (e.g., isotonic fit on exemplars that just satisfy each level's gates). We then assign the label by the interval rule AAI-$n$ if and only if $\Lambda\in[\tau_n,\tau_{n+1})$. This preserves all normative semantics: any system that passes the official gates for level $n$ lands at or above $\tau_n$, higher levels have strictly larger cutpoints, and no gate is weakened-$\Lambda$ is used only for within-level telemetry and progress ranking under a fixed battery/anchors/link.

\noindent \textit{Acceleration focus.}
To reward acceleration as well, choose weights $w_C,w_{\kappa},w_{\delta}\ge 0$ with $w_C+w_{\kappa}+w_{\delta}=1$,
a curvature sharpness $\eta '>0$, and a reference curvature scale $\gamma^{*}>0$.
Let $\delta\tilde{\kappa}:=\frac{d^2 g(\mathcal{C})}{dR^2}$. Define
\[
K(\delta\tilde{\kappa}) \;:=\; \tanh\!\Big(\eta ' \,\delta\tilde{\kappa}/\gamma^{*}\Big)\ \in (-1,1),
\]
and set
\begin{equation}
\Lambda \;:=\; w_C\,U(\mathcal{C}) \;+\; w_{\kappa}\,M(\tilde{\kappa}) \;+\; w_{\delta}\,K(\delta\tilde{\kappa}),
\qquad \Lambda\in(-w_{\delta},\,1).
\label{eq:Lambda}
\end{equation}
\noindent\textit{Interpretation.} $U$ tracks where $\mathcal{C}$ sits between baseline and near-saturation (on the link);
$M$ rewards sustained positive link-slope per resource; $K$ adds credit for acceleration and penalizes strong diminishing returns.
All three are smooth and bounded; $\Lambda$ is therefore stable under noise.

If you prefer $\Lambda\in(0,1)$ regardless of curvature sign, replace $K$ by $K^{+}=(K+1)/2\in(0,1)$.

\paragraph{Escape velocity (finite-time reachability).}
Level language like “AAI-$(n{+}1)$ is exponentially harder than AAI-$n$” becomes operational only once we reason about \emph{rates}.
A lower bound on the link-rate $\tilde{\kappa}=\frac{d\,g(\mathcal{C})}{dR}$ implies a finite resource/time bound to reach any target $1-\varepsilon$ on the same fixed battery.
The escape-velocity inequality thus provides a short, auditable bridge from \emph{measured slope} to \emph{time-to-target} without introducing prices or vendor specifics.

\noindent\textit{Definition (escape velocity).}
Fix $\varepsilon\in(0,1)$. An \emph{escape velocity} is any constant $v_{\mathrm{esc}}>0$ such that
$\tilde{\kappa}(R)\ge v_{\mathrm{esc}}$ for all $R$ in the interval of interest (e.g., $[R_0,R_0+\Delta R]$ or $[R_0,\infty)$).

Recall $r(t):=\frac{dR}{dt}$ is the rate at which resources accrue. Define the resource-rate floor over the horizon by 
\[
r_{\min}\;:=\;\operatorname*{ess\,inf}_{t\ge t_0} r(t)\qquad(\text{assume } r_{\min}>0 \text{ when the time bound is invoked}).
\]
and define the minimal resource needed to reach $1-\varepsilon$ by
\[
R^{\min}_{\varepsilon}
\;:=\;
\inf\Big\{\Delta R\ge 0:\ \exists\,R_1\ge R_0\ \text{with}\ R_1-R_0=\Delta R\ \text{and}\ \mathcal{C}(R_1)\ge 1-\varepsilon\Big\}.
\]

If the link-rate satisfies $\tilde{\kappa}(R)\ge v_{\mathrm{esc}}>0$ on $[R_0,R_0+\Delta R]$, then the resource needed to reach the target $1-\varepsilon$ obeys
\[
R_{\varepsilon}^{\min}\;\le\;\frac{g(1-\varepsilon)-g(\mathcal{C}(R_0))}{v_{\mathrm{esc}}}.
\]
If, in addition, the resource rate satisfies $r(t)=dR/dt\ge r_{\min}>0$ for $t\ge t_0$, then the corresponding time bound is
\[
T_{\varepsilon}\;\le\;\frac{g(1-\varepsilon)-g(\mathcal{C}(t_0))}{v_{\mathrm{esc}}\,r_{\min}}.
\]

\textit{Sketch of a proof.}
The chain rule gives $\frac{d}{dR}g(\mathcal{C}(R))=\tilde{\kappa}(R)$.
If $\tilde{\kappa}(R)\ge v_{\mathrm{esc}}$ on $[R_0,R_0+\Delta R]$, then
\[
g(\mathcal{C}(R_0+\Delta R))-g(\mathcal{C}(R_0))
=\!\int_{R_0}^{R_0+\Delta R}\!\tilde{\kappa}(R)\,dR
\;\ge\; v_{\mathrm{esc}}\,\Delta R.
\]
Since $g$ is increasing and finite at $1-\varepsilon$, it suffices that
$g(\mathcal{C}(R_0))+v_{\mathrm{esc}}\Delta R\ge g(1-\varepsilon)$, yielding the resource bound above. 
If $r(t)\ge r_{\min}>0$, then $\Delta R\ge r_{\min}T$, giving the time bound.

\paragraph{Relation to rate-escape.}
A rate-escape condition on the normalized rate $\bar{\kappa}=\psi(\kappa)$ (e.g., $d\bar{\kappa}/dR\ge a(1-\bar{\kappa})^{\beta}$ with $0<\beta<1$) implies a positive lower bound on $\tilde{\kappa}$ on any interval away from the ceiling because
$\tilde{\kappa}=g'(\mathcal{C})\,\psi^{-1}(\bar{\kappa})$ and $g'(\mathcal{C})$ is bounded below on such intervals.
Thus rate-escape \emph{implies} the escape-velocity bounds above; the converse need not hold.

\paragraph{One-line intuition.}
\emph{Capability escape} guarantees you will \emph{reach capability targets} ($\mathcal{C}\to 1-\varepsilon$ in finite resource).  
\emph{Rate escape} guarantees your \emph{self-improvement rate} itself climbs to its normalized ceiling in finite resource; this \emph{implies} capability escape and is therefore strictly stronger.

\section{OWA-Bench: Open-World Agency Benchmark}
\label{sec:owabench}

\paragraph{Purpose and scope.}
\textbf{OWA-Bench} is a suite of open-world task families that stress agency: long-horizon planning, persistent operation under change, tool/API discovery and composition, multi-agent coordination, and (optionally) physical actuation. This section specifies tasks only: what environments are presented to the agent, how instances are generated, what inputs are provided, what outputs are expected, what kinds of drift occur, what artifacts must be produced for audit, and what behavior is disallowed. As this paper is focused on theoretical base, we refrain from defining these benchmarks explicitely, rather just listing what should be included into them. Actual OWA-bench with tasks and benchmarks will be the work of future papers.

\paragraph{Design principles.}
Tasks are (i) procedurally generated with public schemas; (ii) accompanied by controlled drift schedules that modify interfaces over time; (iii) grounded in real tools/APIs or faithful simulators; (iv) auditable via mandatory action logs and diffs; and (v) reproducible via seed escrow released after evaluation.

\paragraph{Motivation.}
Existing evaluations typically isolate one competency (web navigation, code editing, robotics) under relatively stable interfaces. OWA-Bench complements these by binding competencies inside changing, procedurally generated environments with auditable artifacts. The goal is not to publish numbers here, but to standardize what open-world agents should face so that independent scoring frameworks (e.g., AAI axes/composites) can be applied without redesigning tasks.

\subsection{Task Suites (natural-language specifications)}

\subsubsection*{ToolQuest - Tool and API acquisition \& composition}
\emph{Objective.} Given a natural-language goal that requires external data or capability, the agent must discover a relevant API, authenticate, infer request/response schemas, handle pagination and rate limits, compose multiple endpoints if needed, and produce the requested artifact (e.g., a reconciled report, a dataset, or a visualization).
\emph{Inputs.} Goal description; minimal starting hints (e.g., vendor homepage or OpenAPI link) may or may not be provided depending on difficulty tier.
\emph{Environment.} Internet-accessible APIs with realistic authentication (API keys, OAuth), heterogeneous pagination styles, rate limiting, and intermittent failures.
\emph{Procedural generation.} Instances vary the provider set, authentication mode, schema idiosyncrasies, and cross-API joins required.
\emph{Drift knobs.} Key rotation, endpoint renaming, schema field changes, pagination style swaps, and response wrapper changes.
\emph{Required artifacts.} Full action log (HTTP traces, tool invocations), credential-handling transcript (redacted), and a final artifact (file or text) with provenance.
\emph{Disallowed behavior.} Hard-coded credentials, manual copy-paste, or using hidden evaluation links.

\subsubsection*{ChangeSurf - Web interaction under controlled drift}
\emph{Objective.} Complete a fixed web goal (e.g., make a reservation, extract a table, submit a multi-step form) while the site's DOM, layout, and flow are perturbed across attempts.
\emph{Inputs.} Goal description; initial URL or search query; credentials when needed.
\emph{Environment.} Real browser automation (headless or headed) with instrumentation; sites are either sandboxed clones or partner pages with controlled changes.
\emph{Procedural generation.} Layout templates, field-name permutations, and workflow branches are sampled from generators; authentication flows are rotated.
\emph{Drift knobs.} DOM reorderings, label renamings, additional intermediate screens, and rate/anti-bot frictions (CAPTCHA-free).
\emph{Required artifacts.} Browser traces (network + DOM snapshots), screenshots on key steps, and the final on-page confirmation or extracted payload.
\emph{Disallowed behavior.} Hard-coded CSS/XPath specific to seed IDs; reliance on out-of-band human help.

\subsubsection*{ProjForge - Long-horizon software project}
\emph{Objective.} Over a calendar window of thirty days, plan, build, and maintain a small but realistic software system (e.g., a micro-SaaS) under issue churn and dependency updates.
\emph{Inputs.} High-level product brief; non-binding feature wishlist; an empty monorepo with CI templates; a test harness that evolves over time.
\emph{Environment.} Version control, CI/CD, package registries, and a lightweight cloud runtime; scheduled dependency and environment changes are injected.
\emph{Procedural generation.} Feature backlogs, bug seeds, and dependency graphs are sampled; incident calendars (outages, deprecations) are scheduled.
\emph{Drift knobs.} Dependency version bumps, deprecations, API quota changes, and CI runner environment changes.
\emph{Required artifacts.} Repository history, issue tracker state, CI logs, and weekly release notes with changelogs and postmortems.
\emph{Disallowed behavior.} Replacing the project with a static mock; disabling tests; vendor lock-in workarounds that bypass required tools.

\subsubsection*{MultiCrew - Multi-agent coordination}
\emph{Objective.} Decompose a composite objective into roles (design, implementation, review), allocate work across agents, synchronize progress, and arbitrate conflicts to deliver a coherent artifact set.
\emph{Inputs.} Composite brief; resource/time budgets per role; communication channel with rate limits; optional fixed-cadence human collaborators.
\emph{Environment.} A coordination fabric (chat, shared repo, task board) with rate-limited messaging and explicit hand-offs.
\emph{Procedural generation.} Role graphs, interdependency structures, and negotiation challenges (conflicting specs, ambiguous requirements) are sampled.
\emph{Drift knobs.} Changing acceptance criteria, late-added constraints, and role availability fluctuations.
\emph{Required artifacts.} Message transcripts, task board history, hand-off manifests, and the final deliverables per role.
\emph{Disallowed behavior.} Out-of-band channels not recorded by the fabric; collapsing all roles into a single unrestricted agent when rules require separation.

\subsubsection*{RoboSim2Real (optional) - Embodied curriculum and transfer}
\emph{Objective.} Learn manipulation and navigation skills in simulation and transfer to a physical device, executing tasks safely and reliably.
\emph{Inputs.} Task curriculum; simulator assets; robot calibration files; safety envelopes.
\emph{Environment.} A physics simulator with domain randomization; a real robot with guarded execution; safety interlocks and incident logging.
\emph{Procedural generation.} Object sets, placements, lighting/dynamics randomization, and distractor perturbations.
\emph{Drift knobs.} Sensor noise, friction changes, payload variation, and firmware updates.
\emph{Required artifacts.} Sim logs, real-world execution logs, safety incident reports, and calibration snapshots.
\emph{Disallowed behavior.} Disabling safety limits; teleoperation or human physical assistance during scored runs.

\subsubsection*{SelfRev - Autonomous self-revision}
\emph{Objective.} Propose, run, and validate self-initiated revisions (workflow rewrites, retrieval tuning, toolset additions, or model fine-tunes) from interaction logs; adopt only revisions that survive significance tests and causal ablations on escrowed holdouts.
\emph{Inputs.} Historical traces; access to offline training/tuning; a sandboxed evaluation harness with holdout tasks hidden from the revision search.
\emph{Environment.} Controlled compute budget; data access rules; reproducible pipelines for before/after comparisons.
\emph{Procedural generation.} Candidate revision spaces are parameterized (e.g., retrieval configs, planner heuristics, tool inclusion rules); ablation plans are auto-generated.
\emph{Drift knobs.} None beyond those inherited from other suites; emphasis is on endogenous change caused by the agent's own revisions.
\emph{Required artifacts.} Proposed-change specs, execution plans, before/after logs, statistical test reports, and ablation diffs that isolate the causal contributor.
\emph{Disallowed behavior.} Training or tuning on the escrowed holdouts; cherry-picking runs without pre-registered plans.

\subsection{Instance generation and drift schedules}
\emph{Seeded generation.} Every instance is identified by a tuple (family, seed, drift-step); seeds are escrowed and released after evaluation.  
\emph{Difficulty tiers.} Each suite publishes tier knobs (e.g., number of endpoints in a composition, DOM perturbation strength, backlog size).  
\emph{Drift calendars.} For suites with drift, calendars enumerate when and how interfaces change; agents are not told specifics during evaluation.

\subsection{Inputs, outputs, and artifacts}
\emph{Inputs.} Natural-language briefs, starter assets (URLs, repos, credentials), and any mandatory policies (security, safety).  
\emph{Outputs.} Task-specific deliverables (files, repositories, confirmations) accompanied by provenance manifests.  
\emph{Artifacts for audit.} Action logs, tool-call traces, environment diffs, and for SelfRev, registered plans and ablation bundles. All artifacts are submitted with a run manifest.

\subsection{Rules of engagement}
\emph{What agents may use.} Public web, documented APIs, local tools (browser, code runner, retrieval index) as permitted by the suite.  
\emph{What agents may not use.} Hidden evaluation endpoints, undisclosed credentials, manual human assistance beyond the optional human-in-the-loop protocol, or out-of-band channels that evade logging.  
\emph{Reset semantics.} Suites specify when environments reset, what state persists, and how retries are counted.  
\emph{Time \& resource budgets.} Each suite specifies wall-clock and compute budgets per instance or per window; resource accounting rules are published with the suite.

\paragraph{Notes.}
This section deliberately avoids defining metrics or composite indices. All scoring, normalization, and aggregation live outside this section and are applied to the artifacts and deliverables produced by the suites described above.

\subsection{Alignment with CHC-Based AGI Definitions}\label{sec:chc-alignment}

This subsection aligns the proposed \emph{Autonomous AI (AAI) Scale} with a CHC-based definition of AGI \cite{hendrycks-agi} that evaluates broad human-like cognitive abilities.
Whereas AAI measures an agent's \emph{deployed autonomy} (open-world operation, tool use, self-revision, social coordination, embodiment, and economic throughput), the CHC-based AGI framework measures an agent's \emph{cognitive endowment} (knowledge, reasoning, memory storage and retrieval, literacy, perception, and speed) relative to well-educated adult human norms.
The two perspectives are complementary: CHC covers \emph{what a system can think}; AAI covers \emph{what a system can reliably do}.

\paragraph{Terminology.}
Let the AAI axes be: Autonomy ($A$), Generality ($G$), Planning ($P$), Memory/Persistence ($M$), Tool Economy ($T$), Self-Revision ($R$), Sociality/Coordination ($S$), Embodiment/Actuation ($E$), World-Model Fidelity ($W$), and Economic Throughput ($\text{\$}$).
Let the CHC broad domains (as used in recent AGI definitions) be: Comprehension/Knowledge (Gc), Reading/Writing (Grw), Fluid Reasoning (Gf), Working Memory (Gwm), Long-Term Storage (Gls), Long-Term Retrieval (Glr), Visual Processing (Gv), Auditory Processing (Ga), Processing Speed (Gs), and Quantitative Knowledge (Gq).
We write normalized scores for AAI axes as $s_i \in [0,1]$.

\paragraph{Conceptual crosswalk.}
Table~\ref{tab:chc-crosswalk} gives a practical mapping from AAI axes to the closest CHC domains, alongside indicative diagnostics that can be embedded into \emph{OWA-Bench}.  Several AAI axes (\emph{Tool Economy, Self-Revision, Sociality, Embodiment, Economic Throughput}) intentionally extend beyond CHC cognitive scope.

\begin{table}[h]
\centering
\renewcommand{\arraystretch}{1.2}
\begin{tabular}{p{0.21\textwidth} p{0.29\textwidth} p{0.44\textwidth}}
\hline
\textbf{AAI Axis} & \textbf{Closest CHC Domains} & \textbf{Diagnostics to Import into OWA-Bench} \\ \hline
Generality ($G$) & Gc, Grw, Gq, Gv, Ga & Mixed-domain zero-shot tasks across STEM/humanities; cross-modal transfer checks. \\
Planning ($P$) & Gf (planning/induction), Gwm & Tower/temporal-order tasks; multi-step constraint satisfaction; subgoal extraction. \\
Memory/Persistence ($M$) & Gwm, Gls (storage), Glr (retrieval) & List-update WM, n-back; delayed recall over days; spaced repetition retention curves. \\
World-Model Fidelity ($W$) & Gc, Gf, Glr & Fact verification; counterfactual reasoning; calibrated retrieval (low confabulation). \\
Autonomy ($A$) & (indirect) Gf, Gwm & Closed-loop tasking without hints; error recovery under distribution shift. \\
Tool Economy ($T$) & (no direct CHC analogue) & Tool selection/learning cost curves; API schema generalization; latency/budget tradeoffs. \\
Self-Revision ($R$) & (no direct CHC analogue) & Self-critiquing patches; regression tests across versions; stable improvement coefficient. \\
Sociality ($S$) & (no direct CHC analogue) & Multi-agent coordination; ToM-style negotiation; protocol adherence under ambiguity. \\
Embodiment ($E$) & (no direct CHC analogue) & Sim-to-real loops; sensorimotor policy learning; physical safety envelopes. \\
Economic Throughput ($\text{\$}$) & (no direct CHC analogue) & Job-completion rate, throughput per cost, SLA adherence on real workloads. \\ \hline
\end{tabular}
\caption{AAI$\leftrightarrow$CHC crosswalk and suggested CHC-style probes.}
\label{tab:chc-crosswalk}
\end{table}

\paragraph{Human-normalized anchors for AAI axes.}
For any axis $i$ with raw measurement $x_i$, choose anchors $(L_i, U_i)$ where $L_i$ matches a baseline non-expert agent and $U_i$ matches the median (50$^{\text{th}}$--75$^{\text{th}}$ percentile) well-educated adult on the aligned CHC probe(s).
Define a clipped linear normalization
\begin{equation}
s_i \;=\; \mathrm{clip}\!\left( \frac{x_i - L_i}{U_i - L_i},\, 0,\, 1 \right).
\end{equation}
Report sensitivity of the composite to $(L_i,U_i)$ and provide human reference distributions for the mapped CHC tasks used to set $U_i$.

\paragraph{Non-compensatory micro-battery (added to OWA-Bench).}
To prevent brittle heuristics from compensating across abilities, embed a CHC-style micro-battery:
\begin{itemize}
  \item \textbf{Working Memory (Gwm):} list-update spans; dual-task recall; $\pm1$ arithmetic with interference.
  \item \textbf{Storage (Gls):} drip-fed facts over $N$ days; next-day and week-later recall; forgetting curves.
  \item \textbf{Retrieval (Glr):} verified retrieval precision/recall under citation; adversarial distractors.
  \item \textbf{Reasoning (Gf):} minimal-pair induction; schema induction; out-of-distribution composition.
  \item \textbf{Literacy/Knowledge (Grw/Gc):} robust extraction and paraphrase with fidelity tests.
\end{itemize}

\paragraph{Confabulation \& retrieval fidelity as gates.}
Given $n$ scored items indexed by $t=1,\ldots,n$, let the agent return top-$k$ retrieved items with binary verifications $y_{t,j}\in\{0,1\}$ ($j=1,\ldots,k$). Then
\[
\mathrm{VRP}@k=\frac{1}{n}\sum_{t=1}^{n}\frac{1}{k}\sum_{j=1}^{k} y_{t,j},
\qquad
\mathrm{Hall}@k=1-\mathrm{VRP}@k.
\]
\emph{WM-Span} is the largest list length $L$ at which accuracy $\ge\theta$ on the prescribed working-memory/list-update trials (report median over seeds).
\emph{Delayed-Recall} is the mean fraction of designated items correctly recalled after a fixed delay $\Delta$ (e.g., $+1$ day, $+7$ days), with zero-tool vs.\ tool-assisted variants declared by the suite.

Then $\text{VRP@}k$ is verified-retrieval precision at $k$, and $\text{Hall@}k$ is hallucination rate at $k$ (lower is better).
We can then demand that we promote a system to AAI-2/3 only if
\begin{equation}
\text{VRP@}k \,\ge\, \tau_v,\qquad \text{Hall@}k \,\le\, \tau_h,\qquad \text{WM-Span}\,\ge\, \tau_w,\qquad \text{Delayed-Recall}\,\ge\, \tau_{ms},
\end{equation}
with default thresholds $\tau_v{=}0.85$, $\tau_h{\le}0.05$, $\tau_w$ at the human median, $\tau_{ms}$ at the human median for the corresponding tasks.
Include these as \emph{level gates} in addition to existing autonomy benchmarks.

\paragraph{Jaggedness penalty (uniformity).}
Let $\mathcal{I}$ be the set of axes included in the composite (e.g., $\mathcal{I}{=}\{G,P,M,W,T,R,S,E,\text{\$}\}$) and define the weighted geometric mean
\begin{equation}
\mathrm{AAI}\;=\;\prod_{i\in \mathcal{I}} s_i^{\,w_i},\quad \sum_i w_i = 1,\; w_i \ge 0.
\end{equation}
To discourage highly uneven profiles, define the uniformity factor $U$ and the jaggedness-adjusted index
\begin{equation}
U \;=\; \frac{\min_{i\in \mathcal{I}} s_i}{\mathrm{median}_{i\in \mathcal{I}}(s_i)}\;\in[0,1],\qquad
\mathrm{AAI}^\star \;=\; \mathrm{AAI}\cdot U^{\lambda},
\end{equation}
with $\lambda \in [0,1]$ (default $\lambda{=}0.5$). Report both $\mathrm{AAI}$ and $\mathrm{AAI}^\star$.

\paragraph{AAI$\leftrightarrow$AGI gate calibration.}
Let $\mathcal{C}\mathcal{C}{=}\{\text{Gc},\text{Grw},\text{Gf},\text{Gwm},\text{Gls},\text{Glr}\}$ denote the \emph{cognitive core}.
Define an $\text{AAI}_{\text{core}}$ score by equal weighting over $\mathcal{C}\mathcal{C}$ and require
\begin{equation}
\text{AAI}_{\text{core}} \;\ge\; \gamma \quad\Longrightarrow\quad \text{eligible for AAI-3 promotion},
\end{equation}
with $\gamma$ chosen so that typical well-educated adult performance corresponds to $\gamma{\approx}1.0$.
This makes AAI-3+ contingent on meeting human-normalized cognitive minima while still crediting extra-cognitive deployment factors ($T,R,S,E,\text{\$}$).

\paragraph{Long-horizon learning protocol.}
Introduce multi-day learning tasks: day $t$ drip-feeds novel schema/facts; days $t{+}1$ and $t{+}7$ test zero-tool recall and tool-assisted retrieval.
Score storage (Gls) via retention curves and retrieval (Glr) via $\text{VRP@}k$ with adversarial distractors.
Log self-revision events that alter long-term memory, guarded by regression tests.

\paragraph{Manual-scorable track \& reliability.}
Add a human-judged track for subtle WM/Glr tasks with a rubric and inter-rater reliability (Cohen's $\kappa$ or Krippendorff's $\alpha$). Publish $\kappa/\alpha$ alongside scores.

\paragraph{Bridge experiments (reporting both scores).}
For representative systems, report per-axis AAI scores and CHC-domain scores, and include a correlation/discrepancy analysis.
In particular, highlight cases where high Tool/Economic axes coexist with weak Storage/Retrieval, and vice versa.

\paragraph{Scope \& governance.}
Explicitly distinguish aims: CHC-based AGI measures human-like cognitive breadth/proficiency; AAI measures reliable autonomous operation under constraints.
Downstream risk and governance decisions should reference both scores.

\paragraph{Reproducibility checklist.}
This paper is specification-first: it defines the objects, estimands, and evaluation protocol (axes, gates, task traces, scoring rules, and link-based dynamics) but deliberately releases no code, seeds, generators, drift calendars, or scoring harnesses. To ensure that any future empirical OWA-Bench release or AAI evaluation is reproducible and auditable, we include the following checklist as a pre-registration template. Implementers must complete it (anchors, thresholds, weights, and ablation plans) when publishing results, enabling independent re-computation and anti-gaming review.
\begin{enumerate}
  \item Anchors $(L_i,U_i)$ with human reference distributions for mapped CHC tasks.
  \item Exact micro-battery items, scoring, and adversarial distractor construction.
  \item Thresholds $(\tau_v,\tau_h,\tau_w,\tau_{ms})$ and promotion criteria.
  \item Weight vector $(w_i)$, $\lambda$ for uniformity, and an ablation comparing equal vs.\ proposed weights.
  \item Ablations for tool budget/latency on $T$; and for memory write policies on $R$.
\end{enumerate}

\paragraph{Relation to prior evaluations.}
OWA-Bench is complementary to focused benchmarks in web interaction, code maintenance, multi-agent collaboration, and embodied control. Its novelty is to expose agents to procedurally varied, drifting interfaces across these modes, require full provenance artifacts, and decouple tasks from any particular scoring scheme so different composites or axes can be applied consistently.

\section{Evaluation Protocol \& Illustrative Simulation}
\label{sec:eval}

This paper is theoretical; we do not release OWA-Bench tasks or evaluate real agents. The goal of this section is to provide a spec-first template for future empirical work and a small illustrative simulation (synthetic numbers) showing what reporting could look like. No claims about actual systems are made here.

\paragraph{Setup Example}
\emph{Hardware:} 8$\times$A100 (80\,GB), 64 vCPU, 256\,GB RAM.
\emph{Software:} Python $\ge$3.11, CUDA $\ge$12.1, Chrome headless $\ge$124, Playwright/Selenium, Docker.
\emph{Agent classes:}
(i) RPA (deterministic); 
(ii) Agentic LLM (tools: browser/code/fs/retrieval); 
(iii) Self-Improving (adds offline log-based self-tuning); 
(iv) Orchestrator (adds planner+workers+critic and tool discovery).
These labels are role archetypes, not specific products.

\paragraph{Metrics (what to report).}
Primary: per-axis scores ($A,G,P,M,T,R,S,E,W,\$)$; composite AAI-Index; slope $\kappa$; maintenance/expansion closure (pass/fail).
Secondary: tool success under drift; plan-depth distribution; retrieval recall@K; cost breakdown.

For autonomy/quality tradeoffs, report the autonomy-quality frontier 
\begin{equation}
F(\tau)\;=\;\sum_{t\in T}\omega_t\;\mathbb{E}_{(s,\delta)\sim \mu(\cdot\,|\,t)}\!\left[\mathbb{I}\{\,q(t,s,\delta)\ge \tau\,\}\right],
\label{eq:autonomy-frontier}
\end{equation}
\noindent where $\sum_{t\in T}\omega_t=1$, $q(t,s,\delta)\in[0,1]$ is the scored quality on task $t$ under seed/drift $(s,\delta)$, and $\mathbb{I}\{\cdot\}$ is the indicator.
Report also its area
$\mathrm{AUF}:=\int_{0}^{1} F(\tau)\,d\tau$;
and $\Delta F:=F(\tau^{*})_{\text{model B}}-F(\tau^{*})_{\text{model A}}$ at a published target quality $\tau^{*}$.
Uncertainty: block bootstrap over days/seeds/drift steps.

\begin{table}[h]
\centering
\caption{\textbf{Illustrative simulation only} (seeded; 100 pseudo-runs; mean of normalized axis scores). Numbers are synthetic, not empirical. Higher is better.}
\begin{tabular}{lccccccccccc}
\toprule
Model & A & G & P & M & T & R & S & W & \text{\$} & AAI-Index & $\kappa$ \\
\midrule
RPA Bot & 0.98 & 0.06 & 0.03 & 0.12 & 0.12 & 0.00 & 0.00 & 0.32 & 0.41 & 0.13 & 0.000 \\
Agentic LLM & 0.64 & 0.33 & 0.47 & 0.43 & 0.59 & 0.00 & 0.18 & 0.58 & 0.37 & 0.40 & 0.000 \\
Self-Improving & 0.68 & 0.36 & 0.54 & 0.51 & 0.63 & 0.27 & 0.23 & 0.61 & 0.42 & 0.47 & 0.007 \\
Orchestrator & 0.73 & 0.41 & 0.66 & 0.60 & 0.76 & 0.38 & 0.46 & 0.65 & 0.48 & 0.59 & 0.012 \\
\bottomrule
\end{tabular}
\end{table}

\paragraph{Illustrative narrative (synthetic).}
The Self-Improving archetype meets AAI-2 gates: $\kappa{=}0.007$ (block-bootstrapped 95\% CI [0.004, 0.010]) sustained for 9 consecutive days; $R{>}0$ (self-revision present); maintenance-closure passes ($\alpha{=}0.86$, $Y{=}7$ days) under UI/API drift. 
The Orchestrator approaches AAI-3: $\kappa{=}0.012$ (95\% CI [0.009, 0.015]) on two task families; expansion-closure validated by autonomous discovery and integration of a new API family (ablation removes gains). 
AUF increases from 0.29 (Agentic LLM) to 0.34 (Self-Improving) to 0.42 (Orchestrator); at target quality $\tau^{*}$ the frontier shifts by $\Delta F{\approx}{+}0.08$ and ${+}0.15$ respectively.

\textit{Context.} Contemporary frontier LLM agents typically score high on $W$ and $G$, mixed on $P/T$ under drift, low on $R/S/E$ absent explicit orchestration and revision loops. OWA-Bench is designed to separate such profiles by stressing long-horizon planning, tool discovery under drift, and auditable self-revision, rather than headline i.i.d.\ question answering.

\section{Validity and Delegability Frontier}
\label{sec:validity-frontier-future}

Composite indices (e.g., the AAI-Index) are useful but coarse: they do not answer the operational question \emph{“how much can we safely delegate at a given quality bar without humans?”} This section makes that question measurable and comparable. We (i) assert construct and external validity for the task suites; (ii) introduce the \emph{Delegability Frontier}, an interpretable curve in autonomy--quality space that tracks progress over time on a fixed battery; (iii) summarize it with two scalars (fraction delegable and area-under-frontier) and give estimation/uncertainty guidance; and (iv) list threats to validity with concrete mitigations. The result is a dynamics-aware lens that complements composite scores: as agents improve, the frontier lifts across autonomy demands, supporting clearer promotion decisions, anti-gaming audits, and roadmap planning without changing any normative gates.

\subsection*{Construct \& External Validity}
\textbf{Construct validity.} Each OWA-Bench suite targets a latent construct: (ToolQuest) tool/API acquisition and composition; (ChangeSurf) robustness to web/UI drift; (ProjForge) long-horizon execution \& persistence; (MultiCrew) coordination; (RoboSim2Real, optional) embodiment/actuation; (SelfRev) endogenous self-improvement. Task generators, drift knobs, and required artifacts (logs, diffs, manifests) operationalize these constructs without relying on any single outcome metric.

\textbf{External validity.} Procedural variation (goals, schemas, layouts, dependency graphs) and seed escrow aim to approximate real-world breadth while preserving reproducibility. Optional domains (e.g., embodiment) extend coverage where relevant; domain annexes specify how to carry assumptions across settings (software, operations, robotics).

\subsection*{The Delegability Frontier}

Let $a\in[0,1]$ denote an \emph{autonomy demand} (higher $a$ allows fewer human interventions) and let $Q^{*}\in(0,1)$ be a fixed target quality.
For a policy $\pi$ and a task instance $t$ with seed $s$ and drift $\delta$, define the realized quality
\[
Q(\pi;t,s,\delta)\ :=\ S_t\!\Big(\mathrm{Run}(\pi;\,t,s,\delta,\mathsf{R})\Big)\ \in\ [0,1].
\]
Let $H:\Omega\to\mathbb{R}_{\ge0}$ ($\Omega:=\bigcup_{t}\Omega_t$) count human interventions (or assistance cost) on a trace and fix a maximal allowance $h_{\max}>0$.
Map autonomy demand $a$ to a budget $H_{\max}(a):=(1-a)\,h_{\max}$ and define the admissible policy set
\[
\Pi(a)\ :=\ \Big\{\pi:\ \mathbb{E}_{(t,s,\delta)\sim\mu}\big[\,H(\mathrm{Run}(\pi;t,s,\delta,\mathsf{R}))\,\big]\ \le\ H_{\max}(a)\Big\},
\]
where $\mu$ is the battery's evaluation law over $(t,s,\delta)$.

For a time/resource index $\tau$ (we use $\tau$ for time here to not confuse it with $t$ for task), the \emph{frontier function} is
\begin{equation}
q^{\star}(a,\tau)\ :=\ \sup_{\pi\in\Pi(a)}\ \mathbb{E}_{(t,s,\delta)\sim\mu}\!\left[\,Q(\pi;t,s,\delta)\,\right].
\label{eq:frontier-fn-clean}
\end{equation}
The \emph{Delegability Frontier} $F_{Q^{*}}(\tau)$ at time $\tau$ is the \emph{graph} $\{(a,q):\,q=q^{\star}(a,\tau)\}$ in autonomy--quality space:
\begin{equation}
F_{Q^{*}}(\tau)\ :=\ \big\{(a,q)\in[0,1]^2:\ q=q^{\star}(a,\tau)\big\}.
\end{equation}

With a published autonomy weighting $\nu$ on $[0,1]$ (default: uniform), define two scalars:
\begin{align}
\mathrm{FD}_{Q^{*}}(\tau) &:= \int_{0}^{1} \mathbb{I}\!\left\{\,q^{\star}(a,\tau)\ge Q^{*}\,\right\}\, d\nu(a),
&&\text{(\,fraction of autonomy demands delegable at $Q^{*}$\,)}, \label{eq:FD}\\[0.25em]
\mathrm{AUF}_{Q^{*}}(\tau) &:= \int_{0}^{1} \big(q^{\star}(a,\tau)-Q^{*}\big)_{+}\, d\nu(a),
&&\text{(\,area under the frontier above $Q^{*}$\,)}, \label{eq:AUF}
\end{align}
with $(x)_{+}:=\max\{x,0\}$. Progress from $\tau_0$ to $\tau_1$ is summarized by the frontier shift
\[
\Delta F_{Q^{*}}(\tau_1,\tau_0)\ :=\ \mathrm{AUF}_{Q^{*}}(\tau_1)-\mathrm{AUF}_{Q^{*}}(\tau_0).
\]
Progress holds when $q^{\star}(\cdot,\tau_1)\ge q^{\star}(\cdot,\tau_0)$ pointwise on $[0,1]$, equivalently when the frontier shift $\Delta F_{Q^{*}}(\tau_1,\tau_0)$ is positive.

\emph{Estimation.} In practice, estimate $q^{\star}(a,\tau)$ on autonomy bins $\{a_j\}$ via isotonic regression over policies constrained to $\Pi(a_j)$; obtain $(1-\alpha)$ confidence bands for $\mathrm{FD}_{Q^{*}}$ and $\mathrm{AUF}_{Q^{*}}$ by block bootstrap over $(t,s,\delta)$ draws.

\begin{figure}[t]
\centering
\textbf{Delegability Frontier in quality-autonomy space.}
\label{fig:delegability-frontier}
\vspace{0.5em} 
\begin{tikzpicture}
  \begin{axis}[
    width=0.8\linewidth, height=6.2cm,
    xmin=0, xmax=1, ymin=0, ymax=1,
    xlabel={Autonomy}, ylabel={Quality},
    axis lines=left, tick align=outside,
    xtick={0,0.25,0.5,0.75,1}, ytick={0,0.25,0.5,0.75,1},
    legend style={draw=none, fill=none, at={(0.02,0.98)}, anchor=north west, font=\small},
    clip=false
  ]

  \def\Qstar{0.65}
  \pgfmathsetmacro{\QstarLabel}{\Qstar + 0.04}

  \addplot[name path=frontier0, draw=none, smooth]
    coordinates {(0.00,0.30) (0.20,0.55) (0.50,0.72) (0.80,0.58) (1.00,0.35)};

  \addplot[name path=frontier1, draw=none, smooth]
    coordinates {(0.00,0.35) (0.20,0.62) (0.50,0.82) (0.80,0.68) (1.00,0.42)};

  \addplot[name path=Qstarline, draw=none] coordinates {(0,\Qstar) (1,\Qstar)};

\addplot[draw=none, fill=none, on layer=axis background]
  fill between[
    of=Qstarline and frontier1,
    split,
    every segment no 1/.style={fill=red!14}, 
    every segment no 2/.style={fill=none},   
    every segment no 3/.style={fill=red!14}  
  ];

  \addplot[fill=green!28, draw=none]
    fill between[of=frontier1 and frontier0];

  \addplot[gray,dashed,thick] coordinates {(0,\Qstar) (1,\Qstar)};
  \node[gray] at (axis cs:0.03,\QstarLabel) {$Q^{*}$};

  \addplot[blue!65!black, thick, smooth]
    coordinates {(0.00,0.30) (0.20,0.55) (0.50,0.72) (0.80,0.58) (1.00,0.35)};
  \addlegendentry{$\tau_{0}$ frontier}

  \addplot[red!70!black, thick, smooth]
    coordinates {(0.00,0.35) (0.20,0.62) (0.50,0.82) (0.80,0.68) (1.00,0.42)};
  \addlegendentry{$\tau_{1}$ frontier}

  \node[blue!65!black] at (axis cs:0.84,0.56) {\small $\tau_{0}$};
  \node[red!70!black]  at (axis cs:0.86,0.71) {\small $\tau_{1}$};
  \node[green!40!black,align=center] at (axis cs:0.56,0.78)
    {\small improvement region};

  \end{axis}
\end{tikzpicture}

\caption{Let $Q^*$ be the target KPI quality threshold. The delegable region at time $\tau$ is
$\mathcal{D}_{Q^{*}}(\tau)=\{(a,q): Q^{*}\le q\le q^{\star}(a,\tau)\}$.
Progress is observed as the frontier $F_{Q^{*}}(\tau)$ (the graph of $q^{\star}$) at $\tau_1$
dominating that at $\tau_0$. The later frontier $\tau_{1}$ (red) dominates the earlier $\tau_{0}$ (blue). The dashed line marks $Q^{*}$. Green shading shows improvement between $\tau_{1}$ and $\tau_{0}$; red shading highlights the portion above $Q^{*}$ but below $\tau_{0}$.}
\end{figure}
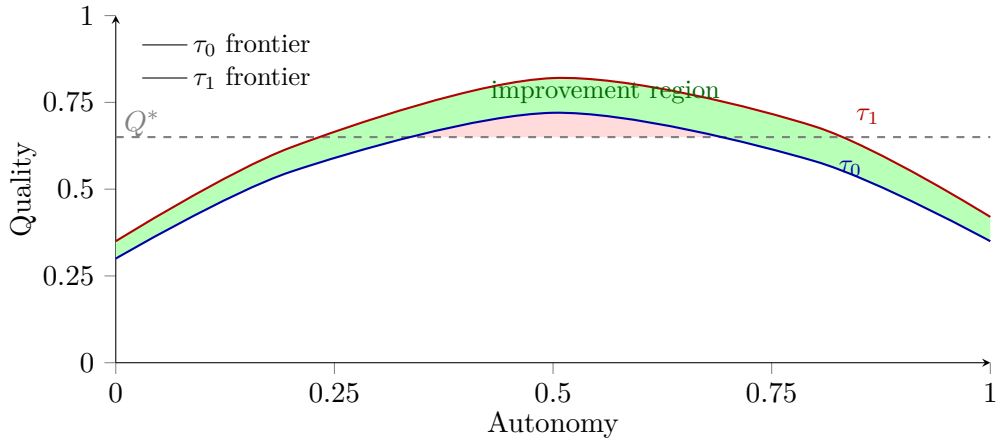

\paragraph{Link to dynamics.}
Let \(R\) denote cumulative agent-initiated resource. Tracking \(\mathrm{AUF}_{Q^{*}}(\tau)\) against \(R\),
the derivative \(\frac{d}{dR}\mathrm{AUF}_{Q^{*}}\) acts as a delegation-oriented slope. Publishing a floor
\(v_{\mathrm{esc}}^{\mathrm{AUF}}>0\) (units: AUF per resource unit) such that
\(\frac{d}{dR}\mathrm{AUF}_{Q^{*}}\ge v_{\mathrm{esc}}^{\mathrm{AUF}}\) guarantees a finite-resource increase
to any target \(\mathrm{AUF}_{Q^{*}}\) below saturation, connecting frontier movement to self-improvement
without reintroducing a composite index here.

\subsection*{Threats to Validity \& Mitigations}

Before using OWA-Bench scores to make claims, we first argue that the benchmark measures the right things (\emph{construct validity}) and that results meaningfully generalize beyond our generators (\emph{external validity}). For construct validity, each suite is tied to a specific latent ability (e.g., tool acquisition/composition, robustness under drift, long-horizon persistence, coordination, self-revision), with required artifacts (logs, diffs, manifests) serving as observable proxies and manipulation checks (e.g., ablations) to confirm causality. For external validity, we rely on procedural generation, controlled drift schedules, held-out providers/layouts, and seed escrow to approximate real-world breadth while preserving reproducibility; domain annexes spell out how assumptions transfer across settings (software, ops, robotics). The aim here is measurement fidelity, not new weights or gates: we clarify what the suites measure and when those measurements can be trusted.

\textbf{Domain dependence.} Task mixes vary by application (software, science, operations, robotics).
\emph{Mitigation:} publish domain annexes and weights; report per-suite alongside any aggregate.

\textbf{Measurement error \& stochasticity.} Long-horizon, tool-using tasks have high variance and temporal dependence.
\emph{Mitigation:} seed escrow; block bootstrap over $(t,s,\delta)$ with time-blocks; HAC-style trend uncertainty; preregistered drift calendars.

\textbf{Construct drift.} Changing generators/calendars mid-series breaks comparability.
\emph{Mitigation:} freeze within a window; version all assets; rebaseline when upgrading.

\textbf{Intervention leakage.} Hidden human aid inflates autonomy.
\emph{Mitigation:} run manifests; mandatory logs/tool traces; reproducible replays; randomized audits.

\textbf{Gaming risk.} Overfitting to specific providers/DOMs/drifts degrades generality.
\emph{Mitigation:} out-of-support eval (held-out providers/layouts); ablations for \textsc{SelfRev}; rotation of seeds/drift templates.

\textbf{Cost confounding.} Mixing human labels or unaccounted acquisition with agent-initiated spend distorts process metrics.
\emph{Mitigation:} strict cost schemas; exclude human labels/hand patches by rule; separate accounting streams.

\textbf{Embodiment optionality.} Purely digital agents can sidestep actuation issues.
\emph{Mitigation:} include embodiment suites where relevant; weight them explicitly; report with/without $E$.

\paragraph{Takeaway.}
Validity hinges on faithful constructs, controlled yet rich variation, and transparent artifacts. The Delegability Frontier provides a compact, dynamics-aware lens: as agents improve, $q^{\star}(\cdot,\tau)$ lifts across autonomy demands and $\mathrm{AUF}_{Q^{*}}$ rises with resource. With robust protocols and continuous refinement, this view remains comparable across time, domains, and agent designs.

\section{Robotics \& Embodied Agents Annex}
This annex adapts the AAI scale to physical agents (mobile manipulation, industrial arms, service robots). It preserves all core semantics and adds embodiment-specific metrics, safety gating, and sim$\to$real protocols. Thresholds and weights in this annex are published ex ante per robotics window; any numeric values below are illustrative.

\subsection*{Embodiment Metrics}
We use the definitions given earlier for Actuation Reliability (AR), Safety Score (SS), Sim-to-Real Transfer (S2R) and we also define Recovery Autonomy (RA), Quality of Control (QC), and physical throughput \(\text{\$}_{\mathrm{phys}}\). Here we only summarize roles and reporting expectations:
\begin{description}[leftmargin=1.2em,labelsep=0.5em]
\item[AR - Actuation Reliability.] Report strict task success without operator intervention, with MTBF/ MTTR/ MTBSI and confidence intervals.
\item[SS - Safety Score.] Apply the earlier incident taxonomy and severity weights; enforce safety precedence (critical incidents invalidate the window; majors trigger fail unless protocol allows).
\item[S2R - Sim-to-Real Transfer.] Compute transfer using paired sim/real episodes under the published randomization and drift profile; report real success separately.
\item[RA - Recovery Autonomy.] Share the fraction of faults recovered autonomously under the suite's fault taxonomy.
\item[QC - Quality of Control.] Publish the normalized control-quality functional and per-task tolerances; report aggregate with CIs.
\item[\(\text{\$}_{\mathrm{phys}}\) - Physical Throughput.] Report cost-normalized tasks/hour (energy, wear, consumables, operator time) distinct from the core \(\text{\$}\) axis.
\end{description}

\noindent\textbf{Embodiment axis.} The embodiment score \(E\) is the geometric mean of (AR, SS, S2R) as defined earlier; RA and QC are treated as non-compensatory auxiliary gates. Provide \((1-\alpha)\) confidence intervals for all metrics.

\noindent\emph{Why report RA, QC, and \(\text{\$}_{\mathrm{phys}}\).}
These three quantities are informative diagnostics that make the embodiment picture auditable and deployment-relevant without changing any core gates. 
\textbf{Recovery Autonomy (RA)} captures an agent's ability to self-stabilize after common, recoverable faults (grasp slip, occlusion, planner dead-end). High AR with low RA signals brittle operation that quietly relies on resets; RA separates true robustness from operator babysitting and correlates with uptime/MTTR. 
\textbf{Quality of Control (QC)} reflects low-level control quality (trajectory smoothness, force/torque compliance, task tolerances) that AR/SS alone can mask; QC helps explain failures (e.g., success shortfalls at tight tolerances, unsafe contact transients) and is essential for safety cases and fine-tuning. 
\textbf{Physical throughput \(\text{\$}_{\mathrm{phys}}\)} ties capability to economics (tasks/hour at target quality, cost-normalized for energy/wear/consumables/operator time), exposing unrealistic “overcautious” policies that pass gates but are non-viable in production; it complements the core \(\$\) axis by isolating hardware-driven costs.

\noindent\emph{Reporting expectations.}
(1) RA is computed over the suite's recoverable fault taxonomy; report the denominator and per-fault breakdown with \((1-\alpha)\) CIs. 
(2) QC is aggregated via a published functional (e.g., weighted norms of tracking error and contact stability); report per-task distributions and tolerance bands. 
(3) \(\text{\$}_{\mathrm{phys}}\) uses a declared price card and accounting schema; report throughput at the same target-quality threshold used for AR/SS/S2R, with cost components itemized. 
These diagnostics are auxiliary, non-compensatory - they do not replace \(E\) but prevent misinterpretation and enable anti-gaming audits and engineering ablations.

\subsection*{Robotics Weights \& Gates}
\textbf{Weights (robotics default):} $w_A{=}1$, $w_G{=}1$, $w_P{=}1.1$, $w_M{=}1$, $w_T{=}1.1$, $w_R{=}1.5$, $w_S{=}1$, $w_E{=}1.25$, $w_W{=}1$, $w_{\text{\$}}{=}1$.

\textbf{AAI-2 (Robotics).} $E \ge 0.60$; $\mathrm{RA} \ge 0.50$; SS: $\le 1$ minor/100\,h, 0 major/critical; maintenance-closure $Y{=}14$ days under drift (payload, friction, lighting, firmware). Demonstrate $\mathrm{S2R} \ge 0.60$ on at least one task family.

\textbf{AAI-3 (Robotics, ``Baby AGI'').} $E \ge 0.75$; $\mathrm{RA} \ge 0.70$; $\mathrm{S2R} \ge 0.80$ across $\ge 2$ task families; $\mathrm{MTBSI} \ge 50$\,h; expansion-closure on $\ge 1$ new end-effector/tool or perception pipeline; $P \ge 0.70$, $W \ge 0.75$ (situational awareness, calibrated).

\textbf{AAI-4 (Robotics, Full AGI).} $E \ge 0.90$; $\mathrm{RA} \ge 0.90$; zero major/critical incidents over $\ge 500$\,h; $\mathrm{S2R} \ge 0.90$; end-to-end project delivery in novel workcells at expert-team cost. (Domain-specific audits required.)

\subsection*{Testbeds \& Protocols}
\textbf{Simulation:} Habitat 2.0 \cite{szot-2021-habitat2} (rearrangement, articulated objects), iGibson 2.0 \cite{li-2022-igibson2} (object-state tasks), BEHAVIOR task suites \cite{liu-2022-behavior-habitat2}; publish randomization knobs (textures, lighting, dynamics).

\textbf{Real-world:} Standardized mobile-manipulation tasks (pick-place, drawer/cabinet, tool use, insertion) with randomized fixtures.

\textbf{Sim$\to$Real Protocol:} Domain randomization (textures, lighting, dynamics), calibration drift schedules, and paired sim/real trials to compute S2R.

\end{document}